\documentclass{article} 


\usepackage[final, nonatbib]{neurips_2022}
\usepackage{microtype}
\usepackage{graphicx}
\usepackage{lscape}
\usepackage{subfigure}
\usepackage{booktabs} 
\usepackage{hyperref}






\usepackage{amsmath,amsfonts,bm}









\def\eqref#1{equation~\ref{#1}}









\def\1{\bm{1}}










\DeclareMathAlphabet{\mathsfit}{\encodingdefault}{\sfdefault}{m}{sl}
\SetMathAlphabet{\mathsfit}{bold}{\encodingdefault}{\sfdefault}{bx}{n}








\newcommand{\Cov}{\mathrm{Cov}}
\newcommand{\Tr}{\mathrm{Tr}}




\usepackage{hyperref}
\usepackage{url}
\usepackage{graphicx}
\usepackage{caption}
\usepackage[numbers,sort&compress]{natbib}
\usepackage{amssymb}
\usepackage{amsmath}
\usepackage{amstext}
\usepackage{amsthm}
\usepackage{mathtools}
\usepackage{upgreek}
\usepackage{algorithm}
\usepackage{algorithmic}
\usepackage{xspace}
\usepackage{xcolor}
\usepackage{cleveref}
\newtheorem*{remark}{Remark}
\newtheorem{theorem}{Theorem}
\newtheorem{prop}{Proposition}
\newtheorem{lemma}[theorem]{Lemma}

\newcommand{\norm}[1]{\left\lVert#1\right\rVert}
\newcommand{\Proto}[1]{\overline{\phi(S_{#1})}}
\newcommand{\bmu}{\boldsymbol{\upmu}}
\newcommand{\bpm}{\bm{\pm}}
\newcommand{\bx}{\boldsymbol{x}}

\newcommand{\E}{\mathbb{E}}

\newcommand{\D}{\mathcal{D}}

\newcommand{\Var}{\mathrm{Var}}

\date{}



%

\sloppy


\title{A Closer Look at Prototype Classifier for Few-shot Image Classification}




\author{%
    Mingcheng Hou \\
    The University of Tokyo \\
    \texttt{hmc1984techjob@gmail.com} \\
    \And
    Issei Sato \\
    The University of Tokyo \\
    \texttt{sato@g.ecc.u-tokyo.ac.jp}
}
\begin{document}
\maketitle




\begin{abstract}
The prototypical network is a prototype classifier based on meta-learning and is widely used for few-shot learning because it classifies unseen examples by constructing class-specific prototypes without adjusting hyper-parameters during meta-testing.
Interestingly, recent research has attracted a lot of attention, showing that training a new linear classifier, which does not use a meta-learning algorithm, performs comparably with the prototypical network.
However, the training of a new linear classifier requires the retraining of the classifier every time a new class appears.
In this paper, we analyze how a prototype classifier works equally well without training a new linear classifier or meta-learning.
We experimentally find that directly using the feature vectors, which is extracted by using standard pre-trained models to construct a prototype classifier in meta-testing, does not perform as well as the prototypical network and training new linear classifiers on the feature vectors of pre-trained models.
Thus, we derive a novel generalization bound for a prototypical classifier and show that the transformation of a feature vector can improve the performance of prototype classifiers.
We experimentally investigate several normalization methods for minimizing the derived bound and find that the same performance can be obtained by using the L2 normalization and minimizing the ratio of the within-class variance to the between-class variance without training a new classifier or meta-learning.
\end{abstract}
\section{Introduction}\label{sec:introduction}
 Few-shot learning is used to adapt quickly to new classes with low annotation cost. Meta-learning is a standard training procedure to tackle the few-shot learning problem and the \emph{prototypical network} \citep{protonet}, a.k.a ProtoNet is a widely used meta-learning algorithm for few-shot learning. In ProtoNet, we use a prototype classifier based on meta-learning to predict the classes of unobserved objects by constructing class-specific prototypes without adjusting the hyper-parameters during meta-testing.

ProtoNet has two advantages.
(1) Since the nearest neighbor method is applied on query data and class prototypes during the meta-test phase, no hyper-parameters are required in the meta-test phase.
(2) The classifiers can quickly adapt to new environments because they do not have to be re-trained for the support set in the meta-test phase when new classes appear.
Moreover, the generalization bound of ProtoNet in relation to the number of shots in a support set has been studied \citep{theoretical_shot}.
The bound suggests that the performance of ProtoNet depends on the ratio of the within-class variance to the between-class variance of the features extracted using the meta-trained model.

 There have been studies on training a new linear classifier on the features extracted using a pre-trained model without meta-learning, which can perform comparably with the meta-learned models \citep{closer_look_at,good_embedding}. We call this approach the linear-evaluation-based approach.
 In these studies, the models are trained with the standard classification problem, i.e., models are trained with cross-entropy loss after linear projection from the embedding space to the class-probability space.
The linear-evaluation-based approach has three advantages over meta-learning.
(1) Training converges faster than meta-learning.
(2) Implementation is simpler.
(3) Meta-learning decreases in performance if the number of shots does not match between meta-training and meta-testing \citep{theoretical_shot}; however, the linear-evaluation-based approach does not need to take this into account.
However,  it requires retraining a linear classifier every time a new class appears. 

To avoid meta-learning during the training phase and the linear evaluation during the testing phase, we focus on using a prototype classifier in the testing phase and training models in a standard classification manner.
As we discuss in \cref{sec:experiment}, we found that when we directly constructed prototypes from the feature vectors extracted using pre-trained models and applied the nearest neighbor method as in the testing phase in ProtoNet , this does not perform as well as  the linear-evaluation-based approach.
We hypothesize that the reason is the difference between the loss function in ProtoNet and pre-trained models. As described in \cref{analysis}, if we consider a prototype as a pseudo sample average of the features in each class, the loss function of ProtoNet can be considered as having a regularizing effect that makes it closer to the sample average of a class.  Since standard classification training computes cross-entropy loss with additional linear projection to make the features linearly separable, the loss function does not have such an effect and can cause large within-class variance.
Figure \ref{figure:feature_distribution_cifar} shows a scatter plot of the features extracted using a neural network with two dimension output trained on CIFAR-10 with ProtoNet(\ref{subfig:embed_cifar10_protonet}) and cross-entropy loss with a linear projection layer(\ref{subfig:embed_cifar10_normal}). This figure indicates that the features extracted using a model trained in a standard classification manner get distributed away from the origin and cause large within-class variance along the direction of the class mean vectors, while those of ProtoNet are more centered to its class means. This phenomenon is also observed in face recognition literature \citep{centerloss, sphereface, cosface, arcface}.

We now focus on a theoretical analysis for a prototype classifier.
A recent study \citep{theoretical_shot} analyzed an upper bound of risk by using a prototype classifier. The bound depends on the number of shots of a support set, between-class variance, and within-class variance. However, it has two drawbacks. 
The analysis requires the class-conditioned distribution of features to be Gaussian and to have the same covariance matrix among classes.
Moreover, since the bound does not depend on the norm of the feature vectors, it is not clear from the bound what feature-transformation method can lead to performance improvement.
Thus, we need to derive a novel bound for a prototype classifier.

Our contributions are threefold.
\begin{enumerate}
\item We relax the existing assumption; specifically, the bound does not require that the features be distributed in Gaussian distribution, and each covariance matrix does not have to be the same among classes.
\item We show that our generalization bound consists of three terms: (1) the variance of the norm of feature vectors, (2) the difference in the distribution shape constructed from each class embedding, and (3) the ratio of the within-class variance to the between-class variance. 
\item From our theoretical analysis and empirical investigation, we found that reducing two terms is a critical factor to improve the performance of prototype classifiers: the variance of the norm of feature vectors and the ratio of the within-class variance to the between-class variance.
\end{enumerate}
While it is empirically known that the $L2$-normalization of a feature vector and minimize the ratio of the within-class variance to the between-class variance can improve the performance of a prototype classifier, our work is the first to theoretically show that these transformation can improve the performance. The bound of  \citet{theoretical_shot} cannot explain the performance improvement of $L2$-normalization because $L2$-normalization does not reduce the ratio of the within-class variance to the between-class variance as shown in Section \ref{sec:experiment}.

A prototype classifier can be applied to any trained feature extractor. It is simpler than linear-evaluation methods such as logistic regression and support vector machine (SVM) for three reasons.
(1) A prototype classifier does not require training in the test phase. The training requires time and memory consumption. (2) Since the nearest neighbor method is applied on query data and class prototypes, the computation in the test phase does not depend on the number of classes while linear-evaluation methods do. (3) A prototype classifier requires fewer hyperparameters than linear-evaluation methods.  Note that our goal is not to replace linear-evaluation methods with a prototype classifier, but to show that a prototype classifier can be used as a counterpart of the linear-evaluation methods and can be a practically useful first step in tackling few-shot learning problems. 
\begin{figure}[t!]
    \vskip 0.2in
    \begin{center}
	\subfigure[ProtoNet]{
		\begin{minipage}[b]{0.4\textwidth}
			\includegraphics[width=1\textwidth]{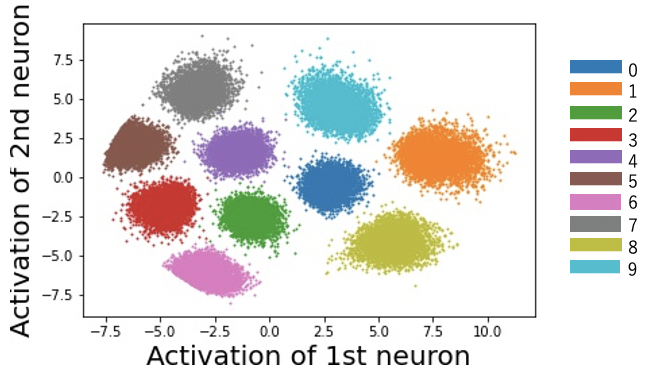}
		
		\end{minipage}
		\label{subfig:embed_cifar10_protonet}
	}
    \subfigure[cross-entropy loss with linear projection layer]{
        \begin{minipage}[b]{0.4\textwidth}
        \includegraphics[width=1\textwidth]{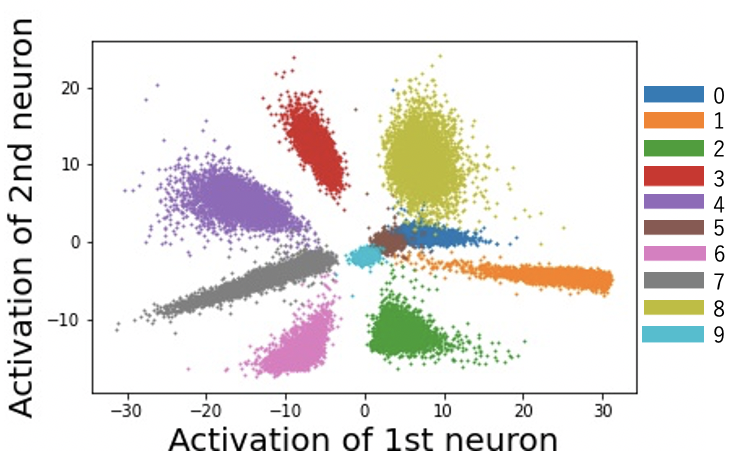}
        \end{minipage}
		\label{subfig:embed_cifar10_normal}
    }
    \vspace*{-4pt}
	\caption{Distribution of features extracted using a neural network with two dimensional final layer trained on CIFAR-10 with (a): ProtoNet loss and (b): cross-entropy loss with linear projection layer. The ProtoNet features get distributed closer to its class center than the features extracted using the model trained on cross-entropy loss with linear projection layer. }
    \label{figure:feature_distribution_cifar}
    \end{center}
    \vskip -0.2in
\end{figure}

\section{Related Work}
We summarize related work by describing a prototype classifier with meta-learning, linear-evaluation-based approach without meta-learning, a prototype classifier without meta-learning and theoretical analysis related to the few-shot learning problem.
\paragraph{Prototype classifier with meta-learning} On the basis of the hypothesis that features well distinguished in the training phase are also useful for classifying new classes, constructing one or multiple prototypes for classifying unseen examples is a widely used approach \citep{macthing_net, protonet, multi_model_protonet, reweight_information_protonet,relationnet,IMProtonet,cross_transformer, imprinted_weight}. Certain algorithms compute similarities between multiple prototypes and unseen examples by using their own modules, such as an attention mechanism  \citep{macthing_net, cross_transformer}, a relation network \citep{relationnet}, reweighting mechanisms by taking into account between-class and within-class interaction \citep{reweight_information_protonet}, and latent clusters \citep{IMProtonet}. 
Another line of research  is transforming the space of extracted features to a better  distinguishable space \citep{subspace_fewshot, tapnet, dao_fewshot_manifold, dao_fewshot_twostage} or taking the variance of features into account \citep{ssimple_cnaps} . 

\paragraph{Update-based meta-learning} In contrast to the prototype classifier with meta-learning, update-based meta-learning adjusts the model parameters to adapt to new classes.
Model-agnostic meta-learning (MAML) and its variants \citep{maml, p_maml, reptile2} search for good initialization parameters that adapt to new classes with a few labeled data and a few update steps.
These approaches require  additional learning of hyper-parameters and training time; thus, they prevent quick adaptation to new classes.

\paragraph{Linear-evaluation-based approach without meta-learning} Interestingly, recent studies have shown that training a new linear classifier with features extracted using a model trained with cross-entropy loss on base-dataset performs comparably with meta-learning based methods \citep{closer_look_at,simple_shot}. A more effective method for training a new classifier in few-shot settings has been proposed \citep{fine-tuning:free-lunch, fine-tuning:self_training}, such as calibrating distribution generated by a support set \citep{fine-tuning:free-lunch} , self-supervised learning \citep{fine-tuning:self_training}, and distilling knowledge to obtain better embeddings \citep{good_embedding}. 
However, similar to update-based meta-learning, these studies require additional hyper-parameters and training for application to new classes; 
thus if we tackle these points, we would have an alternative method that is easier and more convenient to use.

\paragraph{A prototype classifier without meta-learing} Recent studies have shown that using a prototype classifier without meta-learning or a linear classifier can lead to comparable performance \cite{simple_shot, euc_prototype_classifier}.  \citet{simple_shot} has empirically shown that Centering and normalizing the feature vectors can improve the performance of a prototype classifier. Another study empirically shows that applying NCA loss in the training phase can improve the performance of a prototype classifier \cite{euc_prototype_classifier}.
However, their study requires a model to be trained with NCA loss while our studies do not require a model to be trained with any specific loss. These studies do not show any theoretical insights on the performance of a prototype classifier and they do not compare with linear-evaluation-based methods.

\paragraph{Theoretical analysis of few-shot learning}
Even though much improvement has been empirically made on few-shot learning, theoretical analysis is scarce.  In the context of meta-learning, \citet{theoretical_analysis:sample_complexity} provided a risk bound on the meta-testing phase that is related to the number of meta-training data and meta-testing data.
\citet{theoretical_analysis:edtimate_error} derived information-theoretic lower-bounds on min-max rates of convergence for algorithms that are trained on data from multiple sources and tested on novel data.
\citet{theoretical_shot} derived a new bound on a prototype classifier and theoretically demonstrated that the mismatch regarding the number of shots in a support set between meta-training and meta-testing degrades the performance of prototypical networks, which has been only experimentally observed. However, their bound depends on two assumptions: the class-conditional distributions of features are Gaussian and have the same covariance matrix among classes. In contrast, we derive a novel bound that does not depend on any specific distribution.

\section{Theoretical Analysis of Prototype Classifier in Terms of Variance of Norm of Feature Vectors}\label{analysis}
In this section, we first formulate our problem settings and explain the recent theoretical analysis on a prototype classifier. Next we provide our novel bound for a prototype classifier with the bound related to the variance of the norm of the feature vectors. Finally, we list several methods that can improve the performance of a prototype classifier based on our bound.

\subsection{Problem Setting}
\label{sec:problem_setting}
Let $\mathcal{Y}$ be a space of a class, $\tau$  a probability distribution over $\mathcal{Y}$, $\mathcal{X}$ a space of input data, $\D$ a probability distribution over $\mathcal{X}$, $\D_y$ a probability distribution over $\mathcal{X}$ given a class $y$. We define $\D^{\otimes nk}$ and $\D_y^{\otimes n}$ by $\D_y^{\otimes n}=\Pi_{i=1}^n\D_y$ and $\D^{\otimes nk}\equiv \Pi_{i=1}^n \D_i^{\otimes k}$, respectively.
We sample $N$ classes from $\tau$ to form the $N$-way classification problem. Denote by $K$ the number of annotated data in each class and $x\in \mathcal{X},y\in \mathcal{Y}$ as input data and its class respectively.
We define a set of support data of class $c$ sampled from $\tau$ as $S_c=\{\bx_i \mid (\bx_i, y_i)\in \mathcal{X}\times\mathcal{Y} \cap y_i=c\}_{i=1}^K$ and a set of support data in the $N$-way $K$-shot classification problem as $S=\bigcup_{c=1}^{N}S_c$.
Suppose a feature extractor computes a function $\phi: \mathcal{X}\rightarrow \mathbb{R}^D$, where $D$ is the number of the embedding dimensions. $\overline{\phi(S_c)}$ is defined by $\overline{\phi(S_c)}=\frac{1}{K}\sum_{\bx\in{S_c}} \phi(\bx)$.  Let $\Phi$ be a space of the extractor function $\phi$.  Denote by $\mathcal{M}:\Phi\times \mathcal{X} \times \left(\mathcal{X}\times\mathcal{Y}\right)^{NK}\rightarrow \mathbb{R}^N$ a prototype classifier function that computes the probability of input $x$ belonging to class $c$ as follows.
\begin{align}\label{eq:nmc_prob}
    \mathcal{M}(\phi, \bx, S)_c=p_{\mathcal{M}}(y=c|\bx,S, \phi)
    =\frac{\exp\left(-\|\phi(\bx)-\overline{\phi(S_c)}\|^2\right)}{\sum_{l=1}^N\exp\left(-\|\phi(\bx)-\overline{\phi(S_l)}\|^2\right)},
\end{align}
where $\|v\|^2=\sum_{d=1}^D \left(v^{(d)}\right)^2$, and $v^{(d)}$ is the $d$-th dimension of vector $v$.
The prediction of an input $x$, denoted by $\hat{y}\in\mathcal{Y}$ , is computed by taking argmax for $\mathcal{M}(\phi, \bx, S)$, i.e., $\hat{y} = \mathrm{argmax} \mathcal{M}(\phi, \bx, S)$.
We denote by $\E_{z\sim q(z)}[g(z)]$ an operation to take the expectation of $g(z)$ over $z$ distributed as $q(z)$, and we simply denote $\E_{z\sim q(z)}[g(z)]$ as $\E_{z}[g(z)]$ when $z$ is obviously distributed on $q(z)$. We define $\Var_{z\sim q(z)}[g(z)]$ as an operation to take the variance of $g(z)$ over $z$ distributed as $q(z)$.
With $\mathbb{I}$ denoting the indicator function, we define the expected risk $\mathrm{R}_\mathcal{M}$ of a prototype classifier  as
\begin{align}\label{eq:risk_nway}
    \mathrm{R}_\mathcal{M}(\phi) 
    = \E_{S\sim \D^{\otimes nk}}\E_{c\sim\tau}\E_{\bx\sim \D_c}[\mathbb{I}[\mathrm{argmax} \mathcal{M}(\phi, \bx, S)\neq c].
\end{align}

We show a case of multi-class classification in Appendix \ref{sec:n_way_theorem_main} due to lack of space. We obtain the bound on multi-class case by Frechet’s inequality.

Let $c_1$ and $c_2$ denote any pair of classes sampled from $\tau$.  We consider that a query data point $x$ belongs to class $c_1$ and support sets $S$ consist of class $c_1$'s support set and $c_2$'s support set. Then, \eqref{eq:risk_nway} is written as follows.
\begin{align}\label{eq:risk_2way}
    \mathrm{R}_\mathcal{M}(\phi) &= \E_{S,c_1,\bx}[\mathbb{I}[\mathrm{argmax} \mathcal{M}(\phi, \bx,
    S_{c_1}\cup S_{c_2})\neq c_1]],\\
    where \quad 
    \E_{S,c_1,\bx}&=\E_{S\sim \D^{\otimes 2k}} \E_{c_1\sim\tau}\E_{\bx\sim \D_{c_1}}.\nonumber
\end{align}

\subsection{What Feature-Transformation Method Is Expected to Be Effective?}
\label{sec:what_data}
The current theoretical analysis for a prototype classifier  \citep{theoretical_shot} has the following two drawbacks (see Appendix \ref{sec:cao_etal:theory} for the details). The first is that the modeling assumption requires a class-conditioned distribution of the features to follow a Gaussian distribution with the same covariance matrix among classes. For example, when we use the $\mathrm{ReLU}$ activation function in the last layer, it is not normally distributed and the class-conditioned distribution does not have the same covariance matrix as shown in Figure \ref{figure:feature_distribution_cifar}. The second drawback is that it is not clear what feature-transformation method can reduce the upper bound. A feature-transformation method that maximizes the between-class variance and minimizes the within-class variance such as linear discriminant analysis (LDA) \citep{LDASPR} and Embedding Space Transformation (EST) \citep{theoretical_shot} can be expected to improve performance; however, it is not clear how the second term of the denominator changes.

 From Figure \ref{figure:feature_distribution_cifar}, the distribution of each class feature stretches in the direction of its class-mean feature vector. This property is also observed in metric learning literatures \citep{centerloss,sphereface}. A model trained in  the cross-entropy loss after linear projection from the embedding space to the class-probability space computes a probability of  input $x$ belonging to class $c$ with a weight matrix $W\in \mathbb{R}^{D\times K}$ given by
\begin{align}\label{sm_prob}
    p(y=c|\bx,W, \phi)=\frac{\exp{\left(\phi(\bx)^\top W_c\right)}}{\sum_{j=1}^N\exp{\left(\phi(\bx)^\top W_j\right)}}.
\end{align}
Comparing \eqref{sm_prob} with \eqref{eq:nmc_prob}, we found that \eqref{sm_prob} does not have a regularization term similar to the one appearing in \eqref{eq:nmc_prob} that forces the features to be close to its class-mean feature vector. This implies that the features extracted using a model trained with the cross-entropy loss using \eqref{sm_prob} are less close to its class-mean feature vector than the ProtoNet loss on \eqref{eq:nmc_prob}. Through this observation and the property mentioned above, we hypothesize that normalizing the norm of the feature vectors can push the features to each class-mean feature vector and can boost the performance of a prototype classifier trained with cross-entropy loss on \eqref{sm_prob}.

\subsection{Relating Variance of Norm to Upper Bound of Expected Risk}
\label{sec: main_theorem}
To understand what effect the variance of the norm of the feature vectors has on the performance of a prototype classifier, we analyze how variance contributes to the expected risk when an embedding function $\phi$ is fixed. The following theorem provides a generalization bound for the expected risk of a prototype classifier in terms of the variance of the norm of the feature vectors computed using a feature extractor.

\begin{theorem}\label{theorem_main}
Let $\mathcal{M}$ be an operation of a prototype classifier for binary classification defined by \eqref{eq:nmc_prob}.
For $\mu_c=\E_{\bx\sim \D_c}[\phi(\bx)]$, $\Sigma_c=\E_{\bx\sim \D_c}[(\phi(\bx)-\mu_c)(\phi(\bx)-\mu_c)^\top]$, $\mu=\E_{c\sim\tau}[\mu_c]$,$\Sigma=\E_{c\sim\tau}[(\mu_c-\mu)(\mu_c-\mu)^\top]$, and $\Sigma_\tau=\E_{c\sim\tau}[\Sigma_c]$, if $\phi(\bx)$ has the variance of its norm, then the misclassification risk of the prototype classifier for binary classification  $\mathrm{R}_\mathcal{M}$ satisfies
\begin{align}\label{eqn_theorem_main}
 \mathrm{R}_\mathcal{M}(\phi)\leq& 1-\frac{4(\mathrm{Tr}(\Sigma))^{2}}{ \E\mathrm{V}[h_{L2}(\phi(\bx))]+\mathrm{V}_\mathrm{Tr}(\Sigma_{c_1})+\mathrm{V}_\mathrm{wit}(\Sigma_\tau, \Sigma, \bmu) + \E \operatorname{dist_{L2}^2}(\boldsymbol{\mu}_{c_1}, \boldsymbol{\mu}_{c_2})},
 \end{align}
    
\begin{align}
        where \quad \label{eq:variance_norm} \E\mathrm{V}[h_{L2}(\phi(\bx))] =&  \frac{4}{K}\E_{c\sim\tau}\left[\Var_{\bx\sim \D_c}\left[\left\|\phi(\bx)\right\|^{2}\right]\right],\\
    \label{eq:variance_diff}\mathrm{V}_\mathrm{Tr}(\Sigma_{c_i}) =& \left(\frac{4}{K}+\frac{2}{K^2}\right) \Var_{c\sim \tau}\left[\mathrm{Tr}\left(\Sigma_{c_i}\right)\right],\\
    \label{eq:variance_ratio}\mathrm{V}_\mathrm{wit}
            \left(\Sigma_\tau, \Sigma, \bmu\right) &= \frac{8}{K}\left(\Tr(\Sigma_\tau)\right)\left(\Tr(\Sigma)+\bmu^\top\bmu\right)+ 4\left(\Tr(\Sigma)+\bmu^\top\bmu\right)^2 \nonumber\\
            &\hspace{3mm}+4\E_{c_1,c_2\sim\tau}\left[\Tr\left(\Sigma_{c_1}\right)\left(\bmu_{c_2}-\bmu_{c_1}\right)^\top\left(\bmu_{c_2}-\bmu_{c_1}\right)\right],\\
    \label{eq:mean_diff}\E \operatorname{dist_{L2}^2}(\boldsymbol{\mu}_{c_1}, \boldsymbol{\mu}_{c_2}) =& \E_{c_1, c_2}\left[\left(\left(\boldsymbol{\mu}_{c_1}-\boldsymbol{\mu}_{c_2}\right)^\top\left(\boldsymbol{\mu}_{c_1}-\boldsymbol{\mu}_{c_2}\right)\right)^{2}\right].
\end{align}
\end{theorem}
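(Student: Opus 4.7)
The plan is to reformulate the binary misclassification event as a one-sided tail of a scalar margin variable and then apply Cantelli's inequality, after separately computing the mean of that variable and upper-bounding its second moment. Define
\begin{equation*}
Z \;\coloneqq\; \bigl\lVert \phi(\bx) - \overline{\phi(S_{c_1})} \bigr\rVert^2 \;-\; \bigl\lVert \phi(\bx) - \overline{\phi(S_{c_2})} \bigr\rVert^2,
\end{equation*}
with the joint law over $c_1, c_2 \sim \tau$, the two support sets, and $\bx \sim \D_{c_1}$; then the prototype classifier misclassifies a class-$c_1$ query iff $Z \geq 0$, so $\mathrm{R}_\mathcal{M}(\phi) = \Pr(Z \geq 0)$.

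First I would compute $\mathbb{E}[Z]$. Writing $\phi(\bx) = \mu_{c_1} + \epsilon_x$ and $\overline{\phi(S_c)} = \mu_c + \bar\epsilon_c$ with mutually independent, conditionally mean-zero residuals, and using $\mathbb{E}[\lVert \bar\epsilon_c\rVert^2 \mid c] = \Tr(\Sigma_c)/K$, the asymmetric contributions $\Tr(\Sigma_{c_1}) - \Tr(\Sigma_{c_2})$ cancel by the $c_1 \leftrightarrow c_2$ symmetry, which gives $\mathbb{E}[Z] = -\mathbb{E}_{c_1, c_2}[\lVert \mu_{c_1} - \mu_{c_2}\rVert^2] = -2\Tr(\Sigma)$ and hence $\mathbb{E}[Z]^2 = 4(\Tr(\Sigma))^2$, matching the numerator of (\ref{eqn_theorem_main}). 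Since $\mathbb{E}[Z] < 0$ whenever $\Tr(\Sigma) > 0$, Cantelli's inequality gives
\begin{equation*}
\mathrm{R}_\mathcal{M}(\phi) \;\leq\; \frac{\Var(Z)}{\Var(Z) + \mathbb{E}[Z]^2} \;=\; 1 - \frac{\mathbb{E}[Z]^2}{\mathbb{E}[Z^2]},
\end{equation*}
so the theorem reduces to bounding $\mathbb{E}[Z^2]$ above by the four-term denominator of (\ref{eqn_theorem_main}).

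To bound $\mathbb{E}[Z^2]$ I would split $Z = T_1 + T_2 + T_3$, where $T_1 = -\lVert \mu_{c_1} - \mu_{c_2}\rVert^2$, $T_2 = \lVert \epsilon_x - \bar\epsilon_{c_1}\rVert^2 - \lVert \epsilon_x - \bar\epsilon_{c_2}\rVert^2$, and $T_3 = -2(\mu_{c_1}-\mu_{c_2})^{\!\top}(\epsilon_x - \bar\epsilon_{c_2})$, then expand $Z^2 = \sum_{i,j} T_i T_j$ and take iterated expectations, first over $\bx, S$ given the classes and then over $c_1, c_2$. Several cross-terms vanish because $\mathbb{E}[T_3 \mid c_1, c_2] = 0$; the diagonal piece $\mathbb{E}[T_1^2] = \mathbb{E}[\lVert \mu_{c_1} - \mu_{c_2}\rVert^4]$ is exactly the $\mathbb{E}\operatorname{dist_{L2}^2}$ contribution, and via Young's inequality together with the Jensen bound $(\Tr(\Sigma))^2 \leq \tfrac14 \mathbb{E}[\lVert \mu_{c_1}-\mu_{c_2}\rVert^4]$ it also absorbs the mixed product $2\mathbb{E}[T_1 T_2]$.

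The hard part will be matching the remaining contributions precisely to the three variance terms with the stated coefficients. I expect $\mathbb{E}[T_2^2]$, expanded via the fourth-moment identity for sums of independent mean-zero vectors with $\Var(\bar\epsilon_c) = \Sigma_c/K$, to produce both the $(4/K)\,\mathbb{E}_c\Var_{\bx}[\lVert \phi(\bx)\rVert^2]$ term and the $(4/K + 2/K^2)\Var_c[\Tr(\Sigma_c)]$ term that captures class-level heterogeneity of within-class variances. Likewise the conditional identity $\mathbb{E}[T_3^2 \mid c_1, c_2] = 4(\mu_{c_1}-\mu_{c_2})^{\!\top}(\Sigma_{c_1} + \Sigma_{c_2}/K)(\mu_{c_1}-\mu_{c_2})$, combined with $\mathbb{E}_c[\lVert \mu_c\rVert^2] = \Tr(\Sigma) + \bmu^{\!\top}\bmu$ and $\mathbb{E}_c[\Tr(\Sigma_c)] = \Tr(\Sigma_\tau)$, should yield the two pieces of $\mathrm{V}_\mathrm{wit}(\Sigma_\tau, \Sigma, \bmu)$. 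The main technical hurdle is the bookkeeping: several mixed products (e.g.\ $\mathbb{E}[T_2 T_3]$) must be grouped carefully before applying Young's inequality so as to land on exactly the stated constants $4/K$, $4/K + 2/K^2$, and $8/K$ rather than looser versions, and the coupling of $\epsilon_x$ to $T_3$ through $\mu_{c_1}$ needs to be handled by further splitting the conditioning. Substituting the resulting upper bound for $\mathbb{E}[Z^2]$ into Cantelli's inequality then yields (\ref{eqn_theorem_main}).
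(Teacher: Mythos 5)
Your top-level skeleton is exactly the paper's: the paper also writes the risk as a one-sided tail of the margin variable ($\alpha=-Z$ in its notation), applies the one-sided Chebyshev/Cantelli inequality (its Proposition~\ref{prop1}), computes $\E[\alpha]=2\Tr(\Sigma)$ (its Lemma~\ref{lemma:our_expectation}, which also records the conditional mean $\tfrac{1}{K}(\Tr(\Sigma_{c_2})-\Tr(\Sigma_{c_1}))+\lVert\mu_{c_1}-\mu_{c_2}\rVert^2$ that later yields the $\tfrac{2}{K^2}\Var_c[\Tr(\Sigma_c)]$ piece), and reduces everything to bounding the second moment of the margin. Up to that point your proposal is correct and equivalent.

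The gap is in the second-moment bound, which is where all of the theorem's content lives, and your proposed route to it does not line up with the statement. Your decomposition centers everything at the class means, so $T_2=\lVert\epsilon_x-\bar\epsilon_{c_1}\rVert^2-\lVert\epsilon_x-\bar\epsilon_{c_2}\rVert^2$ contains only \emph{centered} residuals; consequently $\E[T_2^2]$ cannot by itself produce the term $\tfrac{4}{K}\E_c\Var_{\bx}[\lVert\phi(\bx)\rVert^2]$, which is the variance of the \emph{uncentered} feature norm ($\Var[\lVert\phi(\bx)\rVert^2]=\Var[\lVert\epsilon_x\rVert^2+2\mu_c^\top\epsilon_x]$ mixes second, third and fourth centered moments with $\mu_c$, and there is no one-sided inequality between it and $\Var[\lVert\epsilon_x\rVert^2]$ in general). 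The $\mu$-dependent pieces you would need sit in $T_3$ and in $\E[T_2T_3]$, which involves third centered moments that do not vanish absent a symmetry assumption the theorem does not make; you flag this as ``bookkeeping'' but it is precisely the step that must be solved. The paper sidesteps the issue by never centering: it expands $\alpha=\lVert\Proto{c_1}\rVert^2-\lVert\Proto{c_2}\rVert^2-2\phi(\bx)^\top(\Proto{c_2}-\Proto{c_1})$, applies $\Var[A+B]\le 2\Var[A]+2\Var[B]$, then Cauchy--Schwarz together with the conditional independence of $\phi(\bx)$ and the prototypes, and finally a Jensen step passing from $\lVert\Proto{c}\rVert^2$ to single-sample norms --- that is how the uncentered $\Var[\lVert\phi(\bx)\rVert^2]$, the $\tfrac{8}{K}\Tr(\Sigma_\tau)(\cdots)$ piece, and the $4(\Tr(\Sigma)+\bmu^\top\bmu)^2$ piece all appear with their stated constants. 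Two smaller points: $\E[T_1T_2]=0$ exactly by the $c_1\leftrightarrow c_2$ exchange symmetry (since $T_1$ is symmetric and $\E[T_2\mid c_1,c_2]$ is antisymmetric), so invoking Young's inequality to ``absorb'' it would only add slack and push you past the stated denominator; and your exact identity for $\E[T_3^2\mid c_1,c_2]$ is correct but still needs a Cauchy--Schwarz-type step to decouple $\mu_{c_1}-\mu_{c_2}$ from $\Sigma_{c_1}$, which both depend on the same class draw.
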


\begin{remark}
    The term $\E\mathrm{V}[h_{L2}(\phi(\bx))]$ is the variance of the norm of the feature vectors.
    The term $\mathrm{V}_\mathrm{Tr}(\Sigma_{c_1})$ is the variance of the summation with the diagonal element of the covariance matrix from each class embedding; it can be interpreted as the difference in the distribution shape constructed from each class embedding.
    The term $\mathrm{V}_\mathrm{wit}(\Sigma_\tau, \Sigma, \bmu)$ is related to the within-class variance. The last term of \eqref{eq:variance_ratio} is approximately linear with $\frac{\Tr\left(\Sigma_{c_1}\right)}{\mathrm{Tr}(\Sigma)}$ because $\frac{\left(\bmu_{c_2}-\bmu_{c_1}\right)^\top\left(\bmu_{c_2}-\bmu_{c_1}\right)}{\Tr(\Sigma)}$ is supposed to be constant. Thus $\frac{\mathrm{V}_\mathrm{wit}(\Sigma_\tau, \Sigma, \bmu)}{\mathrm{Tr}(\Sigma)}$ is a secondary expression for the ratio of the within-class variance to the between-class variance. 
    The term $\E \operatorname{dist_{L2}^2}(\boldsymbol{\mu}_{c_1}, \boldsymbol{\mu}_{c_2})$ is the expectation of the Euclidean distance between the class-mean vectors .
\end{remark}

This bound has the following properties.
\begin{enumerate}
    \item Its derivation does not require the features to be distributed in Gaussian distribution and the class-conditioned covariance matrix does not have to be the same among classes
    \item The bound can decrease when any of four statistics decreases with fixed between-class variance ($\Sigma$): (i) the variance of the norm of the feature vectors, as discussed in Section \ref{sec:what_data}, (ii) the difference in the distribution shape constructed from each class embedding, (iii) the ratio of the within-class variance ($\Sigma_\tau$) to the between-class variance ($\Sigma$), and (iv) the Euclidean distance between the class-mean vectors.
\end{enumerate}
As a result, our bound loosens the modeling assumption of Theorem\ref{theorem_prev}  in Appendix \ref{sec:cao_etal:theory} and has its property. We show the proof in Appendix \ref{main_derivation_detail}.

\subsection{Feature-Transformation Methods}
\label{sec:list_of_methods}
 We hypothesize from Theorem \ref{theorem_main} that in addition to a feature-transformation method related to \eqref{eq:variance_norm},  lowering the ratio of between-class variance to within-class variance can improve the performance of a prototype classifier.
As shown in Figure \ref{fig:bound_term} left in experimantal analysis, the value of \eqref{eq:variance_diff} is relatively small compared to \eqref{eq:variance_norm} and \eqref{eq:variance_ratio}.
Regarding \eqref{eq:mean_diff}, the ratio of the Euclidean distance between the class mean vectors and the between-class variance is supposed to be constant.
Thus, we focus on transformation methods related to \eqref{eq:variance_norm} and \eqref{eq:variance_ratio}.
We analyze four feature-transformation methods: $L_2$-normalization ($L_2$-norm),   EST,LDA, EST+$L_2$-norm and LDA+$L_2$-norm. We give the details on each method in Section \ref{sec:list_of_methods_detail} due to lack of space.

\section{Experimental Evaluation}
\label{sec:experiment}
In this section, we experimentally analyzed the effectiveness of the feature-transformation methods mentioned in Section\ref{sec:list_of_methods} under two scenarios: (1)
standard object recognition and (2) cross-domain adaptation. 
The center loss \citep{centerloss} and affinity loss \citep{affinity_loss} have been proposed to efficiently pull the features of the same class to their centers in the training phase; however,our goal is not to achieve SOTA but, through comparison with existing studies, show that we can achieve the same level of performance without fine-tuning and thus we focus on a widely used pre-trained model in the experiments following the line  of studies \citet{closer_look_at} and \citet{good_embedding}.

\subsection{Datasets and Evaluation Protocol}
For standard object recognition, we use the \textit{mini}ImageNet dataset, \textit{tiered}ImageNet dataset, the CIFARFS dataset, and the FC100 dataset.
\textbf{miniImageNet.} The \textit{mini}ImageNet dataset \citep{macthing_net} is a standard bench-mark for few-shot learning algorithms in recent studies. It contains $100$ classes randomly sampled from ImageNet. Each class contains $600$ images. We follow a widely used splitting protocol \citep{miniimagenet-splitting} to split the dataset into $64/16/20$ for training/validation/testing respectively.

\textbf{tieredImageNet} The \textit{tiered}ImageNet dataset \citep{tieredimagenet-splitting} is another
 subset of ImageNet but has $608$ classes. These classes are grouped into $34$ higher level categories  in accordance with the ImageNet hierarchy and these categories are split into $20$ training ($351$ classes), $6$ validation ($97$ classes), $8$ testing categories ($160$ classes). This splitting protocol ensures that the training set is distinctive enough from the testing set and makes the problem more realistic than miniImageNet since  we generally cannot assume that test classes will be similar to those seen in training.

  \textbf{CIFAR-FS}
 The CIFAR-FS dataset \citep{meta_learning_differentiable_solvers} is a recently proposed fewshot image classification benchmark, consisting of all $100$
classes from CIFAR-$100$ \citep{cifar100}. The classes are randomly
split into $64$, $16$,  and $20$ for meta-training, meta-validation,
and meta-testing respectively.

 \textbf{FC100}
The FC100 dataset \citep{meta_learning_differentiable_convex} is another dataset derived from
CIFAR-$100$ \citep{cifar100}, containing $100$ classes which are grouped
into $20$ categories. These categories are split into $12$ categories for training ($60$ classes),
from $4$ categories for validation ($20$ classes), $4$ categories for testing ($20$ classes). This splitting protocol is similar to the protocol used in tieredImageNet so that the training set is distinctive enough from the testing set and makes the problem more realistic than CIFAR-FS.

For cross-domain scenario, we use the \textit{mini}ImageNet dataset during pre-training stage and we use the CUB-200-2011 dataset \cite{cub}, a.k.a CUB during testing stage.

\textbf{CUB} The CUB dataset contains 200 classes and 11,788 images in total. Following
the evaluation protocol of \citet{closer_look_at}, we split the dataset into $64/16/20$ for training/validation/testing respectively.

 \begin{table*}[t]
 \centering
  \caption{
Classification accuracies with ResNet12 on \textit{mini}ImageNet, \textit{tiered}ImageNet, CIFARFS, and FC100 of ProtoNet, linear-evaluation-based methods \citep{closer_look_at}, centering with $L_2$-norm \cite{simple_shot}, and ours. The Baseline without linear-evaluation methods with accuracy greater than the lower $95\%$ confidence margin of the accuracy of ProtoNet and Baseline are in bold. Regarding to Baseline++, Baseline++ without linear-evaluation methods with accuracy greater than the lower $95\%$ confidence margin of the accuracy of ProtoNet and Baseline++ are in bold. All the methods are our reimplementation .}
  \resizebox{1.0\textwidth}{!}{
   \begin{tabular}{lcccccccc}

	 & \multicolumn{2}{c}{\textit{mini}ImageNet}    & \multicolumn{2}{c}{\textit{tiered}ImageNet}     & \multicolumn{2}{c}{CIFAR-FS}  & \multicolumn{2}{c}{FC100}     \\    \toprule
							& \textbf{1-shot}        & \textbf{5-shot}   & \textbf{1-shot}  & \textbf{5-shot} & \textbf{1-shot}  & \textbf{5-shot} & \textbf{1-shot}  & \textbf{5-shot}\\
\textit{methods with meta-learning or linear-evaluation} &         &   &                 &               & & & & \\\midrule
 ProtoNet\cite{protonet} & $53.48 \pm 0.62$& $73.56  \pm 0.65$ &  $55.40\pm 0.98$                & $77.67\pm 0.70$  & $63.26\pm 0.99$ & $79.91\pm 0.72$& $35.56\pm 0.77$ & $51.12\pm 0.71$               \\
Baseline \cite{closer_look_at} & $54.54\pm 0.80$ & $76.50\pm 0.62$ & $61.67 \pm0.92 $ & $81.62 \pm0.64$ & $60.62 \pm 0.85$ & $79.79 \pm0.70$ & $39.72\pm0.68$  & $56.04\pm 0.76$\\
 \midrule \multicolumn{2}{l}{\textit{A prototype classifier with feature-transformation methods}}         &   &                 &                & & & & \\\midrule
 Baseline-w/o-linear , centering+$L_2$-norm\cite{simple_shot}   & $\textbf{58.96}\bpm \textbf{0.83}$ & $\textbf{75.98}\bpm \textbf{0.62}$ & $\textbf{64.88} \bpm\textbf{0.86} $ & $80.42 \pm0.64$ & $\textbf{62.51} \bpm\textbf{0.86}$ & $\textbf{79.35} \bpm\textbf{0.66}$ & $\textbf{41.58}\bpm\textbf{0.74}$  & $\textbf{55.82}\bpm \textbf{0.76}$\\\midrule

Baseline-w/o-linear   & $46.36\pm 0.58$ & $73.97\pm 0.62$ & $50.60 \pm0.87 $ & $78.10 \pm0.67$ & $45.91 \pm0.89$ & $75.81 \pm0.79$ & $31.60\pm0.61$  & $52.50\pm 0.74$\\
Baseline-w/o-linear , $L_2$-norm  & $\textbf{59.60}\bpm \textbf{1.12}$ & $\textbf{77.12}\bpm \textbf{0.44}$ & $\textbf{64.24} \bpm \textbf{0.89} $ & $\textbf{81.93} \bpm\textbf{0.66}$ & $\textbf{63.78} \bpm\textbf{0.87}$ & $\textbf{80.14} \bpm\textbf{0.75}$ & $\textbf{40.34}\bpm\textbf{0.71}$  & $\textbf{56.61}\bpm \textbf{0.71}$\\
Baseline-w/o-linear , EST  & $51.40\pm 0.83$ & $73.47\pm 0.62$ & $50.94 \pm0.93 $ & $78.47\pm0.68$ & $56.08 \pm0.94$ & $76.64 \pm0.76$ & $\textbf{45.90}\bpm\textbf{0.85}$  & $\textbf{64.32}\bpm \textbf{0.78}$\\
Baseline-w/o-linear , EST+$L_2$-norm  & $\textbf{57.34}\bpm \textbf{0.84}$ & $\textbf{76.90}\bpm \textbf{0.62}$ & $\textbf{64.14} \bpm\textbf{0.91} $ & $\textbf{81.30} \pm\textbf{0.64}$ & $\textbf{65.19} \bpm\textbf{0.93}$ & $\textbf{81.13} \bpm\textbf{0.69}$ & $\textbf{49.54}\bpm\textbf{0.91}$  & $\textbf{64.12}\bpm \textbf{0.69}$\\
Baseline-w/o-linear , LDA  & $51.40\pm 0.83$ & $73.47\pm 0.62$ & $50.94 \pm0.93 $ & $78.47 \pm0.68$ & $56.08 \pm0.94$ & $76.80 \pm0.76$ & $\textbf{45.90}\bpm\textbf{0.85}$  & $\textbf{64.32}\bpm \textbf{0.78}$\\
Baseline-w/o-linear , LDA+$L_2$-norm  & $\textbf{60.29}\bpm \textbf{0.80}$ & $\textbf{75.99}\pm \textbf{0.66}$ & $\textbf{64.16} \bpm\textbf{0.90} $ & $\textbf{81.96} \bpm\textbf{0.65}$ & $\textbf{65.57} \bpm \textbf{0.89}$ & $\textbf{80.80} \bpm\textbf{0.68}$ & $\textbf{50.97}\bpm\textbf{0.81}$  & $\textbf{64.91}\bpm \textbf{0.76}$\\
\midrule
\textit{methods with meta-learning or linear-evaluation} &         &   &                 &               & & & & \\\midrule
Baseline++ \cite{closer_look_at} & $56.33\pm 0.81$ & $74.62\pm 0.60$ & $63.02 \pm0.91$ & $81.07 \pm0.69$ & $65.43 \pm0.95$ & $80.18 \pm 0.75$ & $36.01\pm 0.64$ & $50.73\pm 0.72$ \\
\midrule \multicolumn{2}{l}{\textit{A prototype classifier with feature-transformation methods}}         &   &                 &                & & & & \\\midrule
 Baseline++-w/o-linear , centering+$L_2$-norm\cite{simple_shot}   & $\textbf{57.50}\bpm \textbf{0.81}$ & $ 74.00\bpm 0.60$ & $\textbf{63.31} \bpm\textbf{0.91} $ & $79.19 \pm0.68$ & $\textbf{65.76} \bpm\textbf{0.90}$ & $\textbf{79.95} \bpm \textbf{0.73}$ & $\textbf{37.51}\bpm \textbf{0.71}$  & $50.38\pm 0.72$\\\midrule
Baseline++-w/o-linear & $41.18\pm 0.76$ & $69.48 \pm 0.66$ & $46.52 \pm0.87 $ & $73.85 \pm 0.70 $ & $48.28 \pm0.89$ & $72.79 \pm0.74$ & $29.72\pm0.57$  & $46.85\pm 0.67$\\
Baseline++-w/o-linear , $L_2$-norm  & $\textbf{57.96}\bpm \textbf{0.80}$ & $\textbf{75.38} \bpm \textbf{0.61}$ & $\textbf{65.36} \bpm\textbf{0.90} $ & $\textbf{81.08} \bpm\textbf{0.69}$ & $\textbf{66.52} \bpm\textbf{0.92}$ & $\textbf{80.23} \bpm\textbf{0.70}$ & $\textbf{37.16}\bpm\textbf{0.67}$  & $\textbf{51.10}\bpm \textbf{0.71}$\\
Baseline++-w/o-linear , EST  & $47.11\pm 0.81$ & $69.01\pm 0.63$ & $52.01 \pm0.90 $ & $68.36 \pm0.82$ & $52.72 \pm0.95$ & $72.10 \pm0.74$ & $\textbf{37.66}\bpm\textbf{0.68}$  & $\textbf{53.34}\bpm \textbf{0.72}$\\
Baseline++-w/o-linear , EST+$L_2$-norm  & $\textbf{58.32}\bpm \textbf{0.81}$ & $\textbf{75.19}\bpm \textbf{0.63}$ & $\textbf{64.51} \bpm\textbf{0.96} $ & $\textbf{80.38} \bpm\textbf{0.70}$ & $\textbf{66.01} \bpm \textbf{0.90}$ & $78.78 \pm0.71$ & $\textbf{42.98}\bpm\textbf{0.80}$  & $\textbf{58.13}\bpm \textbf{0.72}$\\
Baseline++-w/o-linear , LDA  & $47.42\pm 0.80$ & $68.43\pm 0.69$ & $47.76 \pm0.90 $ & $74.41 \pm 0.75$ & $52.19 \pm0.97$ & $70.91 \pm0.76$ & $\textbf{40.40}\bpm \textbf{0.75}$  & $\textbf{57.23}\bpm \textbf{0.75}$\\
Baseline++-w/o-linear , LDA+$L_2$-norm  & $\textbf{58.26}\bpm \textbf{0.87}$ & $\textbf{75.13}\bpm \textbf{0.66}$ & $\textbf{65.10} \bpm\textbf{0.92} $ & $\textbf{80.39} \bpm\textbf{0.69}$ & $\textbf{67.31} \bpm \textbf{0.96}$ & $\textbf{79.69} \bpm \textbf{0.72}$ & $\textbf{44.83}\bpm\textbf{0.83}$  & $\textbf{58.67}\bpm \textbf{0.74}$\\
 \bottomrule
 \end{tabular}
 }

 \label{table:resnet12_all}
 \end{table*}
 \begin{table*}[t]
 \centering
  \caption{
Classification accuracies with ResNet12 on \textit{mini}ImageNet, \textit{tiered}ImageNet, CIFARFS, and FC100 of methods in current studies and ours. The best performing methods and any other runs
within $95$\% confidence margin are in bold. Methods with {\dag} show the results of our reimplementation following Section \ref{sec:implementation_details}}
  \resizebox{1.0\textwidth}{!}{
   \begin{tabular}{lcccccccc}

	 & \multicolumn{2}{c}{\textit{mini}ImageNet}    & \multicolumn{2}{c}{\textit{tiered}ImageNet}     & \multicolumn{2}{c}{CIFAR-FS}  & \multicolumn{2}{c}{FC100}     \\    \toprule
							& $1$-shot        & $5$-shot   & $1$-shot  & $5$-shot & $1$-shot  & $5$-shot & $1$-shot  & $5$-shot\\
\textit{methods with meta-learning or linear-evaluation} &         &   &                 &               & & & & \\\midrule

MetaOptNet \cite{meta_learning_differentiable_convex}{\dag} & $59.65\pm 0.60$ & $75.53\pm 0.62$ & $63.01 \pm0.72 $ & $80.01 \pm0.64$ & $66.03 \pm 0.77$ & $79.80 \pm0.54$ & $40.1 \pm 0.63$ & $ 53.3 \pm 0.62$ \\
TapNet \cite{tapnet} &$\textbf{61.65} \bpm \textbf{0.15}$       & $76.36 \pm 0.10$ & $63.08 \pm 0.15$ & $80.26 \pm 0.12$ & - & - & - & - \\

 \midrule \multicolumn{2}{l}{\textit{A prototype classifier with feature-transformation methods}}         &   &                 &                & & & & \\\midrule

Baseline-w/o-linear , $L_2$-norm  & $59.60\pm 1.12$ & $\textbf{77.12}\bpm \textbf{0.44}$ & $64.24 \pm 0.89$ & $\textbf{81.93} \bpm\textbf{0.66}$ & $63.78 \pm0.87$ & $80.14 \pm0.75$ & $40.34\pm0.71$  & $56.61\pm 0.71$\\
Baseline-w/o-linear , EST+$L_2$-norm  & $57.34\pm 0.84$ & $\textbf{76.90}\bpm \textbf{0.62}$ & $64.14 \pm 0.91$ & $81.30 \pm 0.64$ & $65.19 \pm 0.93$ & $\textbf{81.13} \bpm \textbf{0.69}$ & $49.54\pm0.91$  & $\textbf{64.12}\bpm \textbf{0.69}$\\
Baseline-w/o-linear , LDA+$L_2$-norm  & $60.29\pm 0.80$ & $75.99\pm 0.66$ & $64.16 \pm0.90$ & $\textbf{81.96} \bpm \textbf{0.65}$ & $65.57 \pm 0.89$ & $\textbf{80.80} \bpm \textbf{0.68}$ & $\textbf{50.97}\bpm\textbf{0.81}$  & $\textbf{64.91}\bpm \textbf{0.76}$\\
\midrule

Baseline++-w/o-linear , $L_2$-norm  & $57.96\pm 0.80$ & $75.38 \pm 0.61$ & $\textbf{65.36} \bpm \textbf{0.90}$ & $81.08 \pm0.69$ & $66.52 \pm0.92$ & $80.23 \pm0.70$ & $37.16\pm0.67$  & $51.10\pm 0.71$\\
Baseline++-w/o-linear , EST+$L_2$-norm  & $58.32\pm 0.81$ & $75.19\pm 0.63$ & $64.51 \pm0.96$ & $80.38 \pm0.70$ & $66.01 \pm 0.90$ & $78.78 \pm0.71$ & $42.98\pm0.80$  & $58.13\pm 0.72$\\
Baseline++-w/o-linear , LDA+$L_2$-norm  & $58.26\pm 0.87$ & $75.13\pm 0.66$ & $\textbf{65.10} \bpm \textbf{0.92}$ & $80.39 \pm0.69$ & $\textbf{67.31} \bpm \textbf{0.96}$ & $79.69 \pm 0.72$ & $44.83\pm0.83$  & $58.67\pm 0.74$\\
 \bottomrule
 \end{tabular}
 }

 \label{table:resnet12_sota}
 \end{table*}
 \subsection{Implementation Details}\label{sec:implementation_details}
 We compared the feature-transformation methods against ProtoNet \citep{protonet}, Baseline, and Baseline++ \citep{closer_look_at}. For Baseline and Baseline++, we trained the linear projection layer on a support set. The difference between Baseline and Baseline++ is that the norm of the linear projection layer and features in Baseline++ are normalized to be constant. We call Baseline and Baseline++ as ``linear evaluation'' methods. We also compared with the feature-transformation method proposed in a previous study \citep{simple_shot}. In that study they transformed the features so that the mean of all features was the origin before $L_2$-normalization and then used a prototype classifier without training a new linear classifier. We call this operation centering+$L_2$-norm.  We re-implemented these methods following the training procedure in a previous study \citep{closer_look_at}. In the pre-training stage, where the cross-entropy loss was used and  meta-learning was not used, we trained $400$ epochs with a batch size of $16$. In the meta-training stage for ProtoNet, we trained $60,000$ episodes for $1$-shot and $40,000$ episodes for $5$-shot tasks.
 We used the validation set
to select the training episodes with the best accuracy. We evaluated the model every $5$ epochs, not every epoch as experimented in \citet{closer_look_at} , and we performed early stopping when the validation score does not improve for $50$ epochs. In each episode, we sampled $N$ classes to
form $N$-way classification (in meta-training $N$=20 and meta-testing $N$=5 following the original study for ProtoNet \citep{protonet}). For each class, we selected $K$ labeled instances as our support set and 16 instances for the
query set for a $K$-shot task.

In the linear evaluation or meta-testing stage for all methods, we averaged the results over $600$ trials.
In each trial, we randomly sampled $5$ classes from novel classes, and in each class, we also
selected $K$ instances for the support set and 16 for the query set. For Baseline and Baseline++, we used the
entire support set to train a new classifier for $100$ iterations with a batch size of $4$. With ProtoNet, we used the models trained in the same shots as meta-testing stage since a mismatch in the number of shots between meta-training and meta-testing degrades performance \citep{theoretical_shot}.
All methods were trained from scratch, and the Adam optimizer with an initial learning rate of $10^{-3}$ was used. We applied standard data augmentation including random crop, horizontal-flip, and color jitter in both the training stages.
We used a ResNet12 network and ResNet18 network following the previous study \citep{closer_look_at, tapnet}.

 For LDA we set $\lambda=0.0001$ for \eqref{eq:lda_reg} in Appendix \ref{sec:list_of_methods_detail} and for EST we set the dimensions of the projected space to $60$ following the settings of the original study \citep{theoretical_shot}. Following the procedure of EST,  we calculate the transformation matrix of LDA based on the features of the training dataset.

 \subsection{Results}\label{sec:results}
 We present the experimental results of standard object recognition in Table \ref{table:resnet12_all} on the basis of backbones with ResNet12 for a comprehensive comparison. We show the result of backbones with ResNet18 in Table \ref{table:resnet18_all}. We present the discussion on cross-dataset scenario and the result of the scenario in \ref{sec:performace_results_cross_dataset}.
 

\paragraph{Comparison of feature-transformation methods with ProtoNet, linear evaluation methods, and centering+$L_2$-norm} From Table \ref{table:resnet12_all}, we can observe that the prototype classifier with $L_2$-norm, EST+$L_2$-norm, LDA+$L_2$-norm performs comparably with ProtoNet and the linear-evaluation-based approach in all settings. Comparing the feature-transformation methods described in Section \ref{sec:list_of_methods} with centering+$L_2$-norm, centering+$L_2$-norm can slightly improve the performance of the prototype classifier in several $1$-shot settings . However, in $5$-shot settings, the boost decreases and even performs worse than linear-evaluation methods, e.g. \textit{mini}ImageNet and \textit{tiered}ImageNet.

\paragraph{Comparison among feature-transformation methods} In the $1$-shot setting, although EST falls short of ProtoNet and the linear-evaluation-based approach, it also improves the performance of a prototype classifier. The performance gain of both $L_2$-norm, EST, LDA, EST+$L_2$-norm and LDA + $L_2$-norm decrease when the number of shots increases. This phenomenon can be explained through Theorem \ref{theorem_main}. The term relating to the variance of the norm  and the ratio of the within-class variance to the between-class variance depends on $K$. Since the term diminishes as $K$ increases, the performance gain of the feature-transformation methods decreases. 

\paragraph{Comparison of feature-transformation methods with current studies} Table \ref{table:resnet12_sota} shows the performance of MetaOptNet \cite{meta_learning_differentiable_convex}, TapNet \cite{tapnet} and feature-transformation methods that performs comparable with ProtoNet or linear-evaluation-based methods in Table \ref{table:resnet12_all}. We can observe that a prototype classifier can achieve comparable performance with current studies in most of the settings with the feature transformations. Moreover, we can further boost the performance of a prototype classifier by combining $L_2$-norm and the methods to reduce the ratio of the within-class variance to the between-class variance, e.g. CIFAR-FS and FC100.

\paragraph{Discussion on feature-transformation methods}
 LDA and EST in the $5$-shot setting do not improve the performance so much compared with $L_2$-norm variants while in the $1$-shot settings, LDA and EST improve the performance. This is because, in the $5$-shot settings, the values computed from \eqref{eq:variance_ratio} get smaller with larger $K$, and the effect of \eqref{eq:variance_ratio} on the risk of a prototype classifier in the $1$-shot settings is larger. Especially, EST and LDA outperform $L_2$-norm in FC100. We found from Figure \ref{fig:bound_term} that the features of FC100 show the largest ratio of the within-class to between-class variance among all datasets. Thus the method of reducing the ratio works better with FC100 features than with any other dataset's features. As discussed in Section \ref{sec:results}, the combination of feature-transformation methods that reduce the variance of the norm and the ratio of the within-class variance to the between-class variance can further boost the performance of a prototype classifier. This indicates that reducing the term \eqref{eq:variance_norm} and \eqref{eq:variance_ratio} is an important factor for the performance of a prototype classifier.

 \begin{figure}[t!]
		\centering
		\begin{small}
			\captionsetup{font=footnotesize}
			\includegraphics[width=0.8\linewidth]{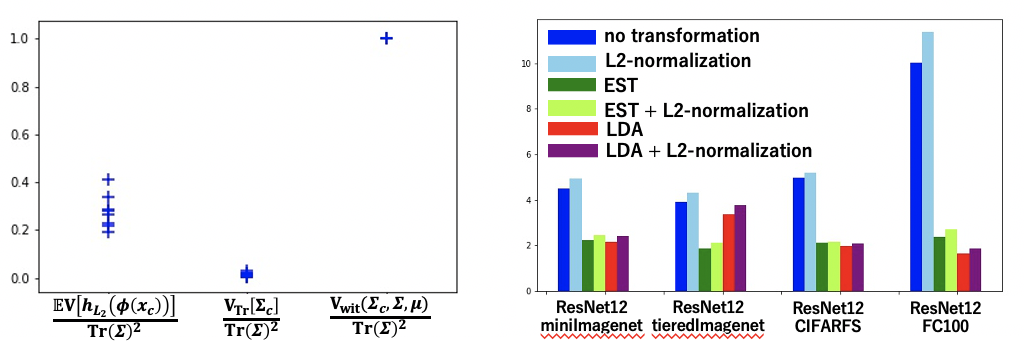}
			\caption{Left: We plotted the values shown in equations \ref{eq:variance_norm},\ref{eq:variance_diff},\ref{eq:variance_ratio} divided by $\mathrm{Tr(\Sigma)^2}$. The values are computed from test-split of each dataset with ResNet12 and ResNet18. We scaled the values so that each dataset's $\frac{\mathrm{V}_\mathrm{wit}(\Sigma_\tau, \Sigma, \bmu)}{\mathrm{Tr}(\Sigma)^2}$ to be $1$ for simplicity. Right: We plotted the ratio of the within-class variance to the between-class variance ($\frac{\mathrm{Tr}(\Sigma_\tau)}{\mathrm{Tr}(\Sigma)}$) before and after applying $L_2$-norm, EST, LDA, EST + $L_2$-norm and LDA+$L_2$-norm of each dataset.  }
			\label{fig:bound_term}
		\end{small}
\end{figure}

\subsection{The comparison of the ratio of the within-class variance to the between-class variance before and after applying \texorpdfstring{$L_2$}-norm, EST and LDA}
 
 We show the ratio of the within-class variance to the between-class variance before and after applying $L_2$-norm, EST, LDA, EST + $L_2$-norm, LDA + $L_2$-norm in Figure \ref{fig:bound_term} right. We calculated the ratio of  the between-class variance to each class's variance of each dataset. This figure shows that $L_2$-normalization slightly changes the ratio while EST and LDA can reduce the ratio. The combination of EST and $L_2$-norm or LDA and $L_2$-norm can reduce the ratio while decreasing the variance of the norm of feature vectors to 0. Therefore, EST + $L_2$-norm and LDA + $L_2$-norm performs better than $L_2$-norm. Moreover, $L_2$-norm does not affect the bound of \citet{theoretical_shot} because their bound depends on the ratio of the between-class variance to each class's variance as shown in \ref{theorem_prev}. Thus the bound of \citet{theoretical_shot} cannot explain the performance improvement of a prototype classifier with $L_2$-norm while our bound can.

\section{Conclusion}
\label{sec:conclusion}
We theoretically and experimentally analyzed how the variance of the norm of feature vectors affects the performance of a prototype classifier. We derived a generalization bound that does not require the features to be distributed in Gaussian distribution and the class-conditioned covariance matrix does not have to be the same among classes. We found that using EST+$L_2$-norm makes the classifier comparable with ProtoNet and the linear-evaluation-based approach. Our experiments show that when the number of shots in a support set increases, the performance gain from a feature-transformation method decreases, which is consistent with the results of theoretical analysis.

One limitation of this paper is that, in the first place, the prototype classifier works well under the assumption that the data distribution for each class is unimodal and isotropic modeling.
However, we consider that this assumption may be sound in practice because the multimodality of the data distribution in a class is typically caused by the way the data is annotated.
Since the number of a support set is small in few-shot settings (for example, in $5$-way $5$-shot, the number of labeled data is $25$), we can easily check the labels and re-label them so that they don't become multi-modal. 

\paragraph{Broader impact.} We believe that our work shows that a prototypical classifier is expected to be  a practically useful first step in tackling few-shot learning problems because of its simplicity. The progress in few-shot learning can impact importnant problems such as medical images and re-identification problems. We also recognize our work might constitute a threat that authoritarian entities deploy few-shot learning algorithms for  surveillance.

\bibliography{example}
\bibliographystyle{unsrtnat}

\newpage
\appendix
\onecolumn
\section{Appendix}
\subsection{Detail of Feature-Transforming Methods}
\label{sec:list_of_methods_detail}
\paragraph{$L_2$ normalization ($L_2$-norm)}From \eqref{eq:variance_norm}, normalizing the norm of the feature vectors can improve the performance of a prototype classifier.
We denote a function normalizing the norm as  $\psi_{L2}$ given by
\begin{align}\label{norm_normalize}
    \psi_{L2}(\phi(\bx)) = \frac{\phi(\bx)}{\|\phi(\bx)\|}.
\end{align}
\paragraph{LDA} From \eqref{eq:variance_ratio}, decreasing the ratio of the within-class variance to the between-class variance can improve the performance of a prototype classifier. LDA \citep{LDASPR} is widely used to search for a projection space that maximizes the between-class variance and minimizes the within-class variance.  It computes the eigenvectors of a matrix $\hat{\Sigma_\tau}^{-1}\hat{\Sigma}$, where $\hat{\Sigma}$ is the covariance matrix of prototypes and $\hat{\Sigma_\tau}$ is the class-conditioned covariance matrix. Since the number of data is small in few-shot settings, $\hat{\Sigma_\tau}^{-1}$ cannot be estimated stably and we add a regularizer term to $\Hat{\Sigma_\tau}$ and define it as $\Hat{\Sigma_\tau}_\mathrm{reg}$.
\begin{align}\label{eq:lda_reg}
    \hat{\Sigma_\tau}_\mathrm{reg} = \hat{\Sigma_\tau} + \lambda I,
\end{align}
    where $\lambda \in R^1, I\in R^{D\times D}$ is identity matrix.

\paragraph{EST} Since computation of LDA is unstable, we also analyze the effect of EST \citep{theoretical_shot}. EST computes eigenvectors of a matrix $\hat{\Sigma}- \rho\hat{\Sigma_\tau}$: the difference between the covariance matrix of the class mean vectors and the class mean covariance matrix with weight parameter $\rho$. Similar to LDA, EST also searches for the projection space that maximizes the $\Sigma$ and minimizes the $\Sigma_\tau$.

\paragraph{EST+$L_2$-norm} We hypothesize that the combination of the transforming methods can improve the performance of a prototype classifier independently from each other. Specifically, we focus on reducing \eqref{eq:variance_norm} and \eqref{eq:variance_ratio} by combining \textbf{EST} and \textbf{$L_2$-norm}.
We first apply EST to reduce \eqref{eq:variance_ratio} and after that we apply $L_2$-norm to reduce \eqref{eq:variance_norm}. At the end of the operation we want the variance of the norm to be $0$, thus we apply EST and after that we apply $L_2$-norm.


\paragraph{LDA+$L_2$-norm} We focus on reducing \eqref{eq:variance_ratio} and \eqref{eq:variance_diff} by combining \textbf{LDA} and \textbf{$L_2$-norm}. Following the similar procedure of \textbf{EST+$L_2$-norm}, we first apply LDA and after that we apply $L_2$-norm. 

\subsection{Existing Upper Bound on Expected Risk for Prototype Classifier}\label{sec:cao_etal:theory}
To analyze the behavior of a prototype classifier, we start from the current study \citep{theoretical_shot}.
The following theorem is the upper bound of the expected risk of prototypical networks with the next conditions.
\begin{itemize}
    \item The probability distribution of an extracted feature $\phi(\bx)$ given its class $y=c$ is Gaussian i.e $\D_y=\mathcal{N}(\mu_c, \Sigma_c)$, where $\mu_c=\E_{\bx\sim \D_c}[\phi(\bx)]$ and $\Sigma_c=\E_{\bx\sim \D_c}[(\phi(\bx)-\mu_c)(\phi(\bx)-\mu_c)^\top]$.
    \item All class-conditioned distributions have the same covariance matrix, i.e., $\forall (c, c'), \Sigma_c = \Sigma_{c'}$.
\end{itemize}

\begin{theorem}[\citet{theoretical_shot}]\label{theorem_prev}
Let $\mathcal{M}$ be an operation of a prototype classifier on binary classification defined by \eqref{eq:nmc_prob}. Then for $\mu=\E_{c\sim\tau}[\mu_c]$ and $\Sigma=\E_{c\sim\tau}[(\mu_c-\mu)(\mu_c-\mu)^\top]$, the misclassification risk of the prototype classifier on binary classification $\mathrm{R}_\mathcal{M}$ satisfies
\begin{align}\label{eqn_theorem_prev}
    \mathrm{R}_\mathcal{M}(\phi)\leq 1 - \frac{4 \operatorname{Tr}(\Sigma)^{2}}{8(1+1 / k)^{2} \operatorname{Tr}\left(\Sigma_{c}^{2}\right)+16(1+1 / k) \operatorname{Tr}\left(\Sigma \Sigma_{c}\right)+\E \operatorname{dist_{L2}^2}(\boldsymbol{\mu}_{c_1}, \boldsymbol{\mu}_{c_2})},
\end{align}
where\hspace{1mm}
$\E \operatorname{dist_{L2}^2}(\boldsymbol{\mu}_{c_1}, \boldsymbol{\mu}_{c_2}) = \E_{c_1, c_2}\left[\left(\left(\boldsymbol{\mu}_{c_1}-\boldsymbol{\mu}_{c_2}\right)\top\left(\boldsymbol{\mu}_{c_1}-\boldsymbol{\mu}_{c_2}\right)\right)^{2}\right]$.

\end{theorem}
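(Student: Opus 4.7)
The plan is to convert $\mathrm{R}_\mathcal{M}(\phi)$ into a tail bound on a margin random variable and control it through Cantelli's one-sided inequality, after which first and second moments can be evaluated in closed form using the Gaussian structure. Specifically, I would let $Z := \|\phi(\bx) - \overline{\phi(S_{c_2})}\|^2 - \|\phi(\bx) - \overline{\phi(S_{c_1})}\|^2$ for $\bx$ drawn from $\D_{c_1}$, so that the prototype classifier errs exactly when $Z\le 0$ and $\mathrm{R}_\mathcal{M}(\phi) = \Pr(Z\le 0)$, with the probability taken over $(c_1,c_2)\sim\tau^{\otimes 2}$, $S\sim\D^{\otimes 2k}$ and $\bx$. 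Cantelli's inequality then gives $\Pr(Z\le 0) \le 1 - (\E Z)^2/\E[Z^2]$ whenever $\E Z>0$, so the task reduces to evaluating $\E Z$ exactly and upper-bounding $\E[Z^2]$.

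For the moments I would use the decomposition $\phi(\bx)=\mu_{c_1}+u$, $\overline{\phi(S_{c_1})}=\mu_{c_1}+p$ and $\overline{\phi(S_{c_2})}=\mu_{c_2}+q$, where conditional on $(c_1,c_2)$ the vectors $u,p,q$ are independent zero-mean Gaussians with covariances $\Sigma_c$, $\Sigma_c/K$ and $\Sigma_c/K$ respectively (here the shared-covariance assumption together with Gaussianity is essential). Direct expansion gives $Z = \|\delta\|^2 + 2u^\top\delta - 2\delta^\top q + 2u^\top(p-q) + \|q\|^2 - \|p\|^2$ with $\delta := \mu_{c_1}-\mu_{c_2}$. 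All linear-in-$(u,p,q)$ terms vanish in expectation and the two norm expectations $\E\|p\|^2 = \E\|q\|^2 = \Tr(\Sigma_c)/K$ cancel, so $\E[Z\mid c_1,c_2]=\|\delta\|^2$; averaging yields $\E Z = \E_{c_1,c_2}\|\mu_{c_1}-\mu_{c_2}\|^2 = 2\Tr(\Sigma)$, which matches the numerator $4\Tr(\Sigma)^2$.

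For $\E[Z^2]$ I would apply Isserlis' theorem summand by summand. The crucial shortcut is that every pairwise covariance among the six fluctuating summands of $Z$ vanishes—either by the independence of $u,p,q$ or by the odd-central-moment identity $\E[\|p\|^2 p]=0$—so $\Var(Z\mid c_1,c_2)$ collapses to a sum of individual variances of the form $\delta^\top\Sigma_c\delta$ and $\Tr(\Sigma_c^2)$ with explicit $1/K$ coefficients. Combining with $(\E[Z\mid c_1,c_2])^2=\|\delta\|^4$ and averaging using $\E_{c_1,c_2}[\delta^\top\Sigma_c\delta]=2\Tr(\Sigma\Sigma_c)$ and $\E_{c_1,c_2}[\|\delta\|^4]=\E\operatorname{dist_{L2}^2}(\boldsymbol{\mu}_{c_1},\boldsymbol{\mu}_{c_2})$ produces an expression of exactly the form of the claimed denominator; mildly loosening the $1/K$-dependent coefficients then delivers the stated constants $8(1+1/K)^2$ and $16(1+1/K)$. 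Substituting back into the Cantelli bound completes the argument.

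The main obstacle is the bookkeeping in the second-moment step: one must verify that all fifteen pairwise covariances between the six summands of $Z$ really vanish (invoking independence of $u,p,q$, Gaussian symmetry and the vanishing of third central moments) and evaluate Gaussian fourth moments such as $\Var(\|p\|^2)=2\Tr((\Sigma_c/K)^2)$ and $\E[(u^\top p)^2]=\Tr((\Sigma_c/K)\Sigma_c)$, while correctly tracking the $1/K$ powers. A secondary subtlety is ensuring the hypothesis $\E Z>0$ needed by Cantelli; since $\E Z = 2\Tr(\Sigma)\ge 0$, one needs only the mild non-degeneracy that $\tau$ is not supported on a single class mean.
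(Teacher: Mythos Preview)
Your proposal is correct and follows essentially the same route as the paper: rewrite the risk as $\Pr(\alpha\le 0)$ for the squared-distance margin, apply the one-sided Chebyshev/Cantelli inequality, compute $\E[\alpha\mid c_1,c_2]=\|\mu_{c_1}-\mu_{c_2}\|^2$ and $\E[\alpha]=2\Tr(\Sigma)$, and then bound $\E_{c_1,c_2}[\Var_{\bx,S}(\alpha\mid c_1,c_2)]$ by $8(1+1/K)\Tr\bigl(\Sigma_c((1+1/K)\Sigma_c+2\Sigma)\bigr)$. The only difference is that the paper states the first two moments as Lemmas~\ref{lemma1} and~\ref{lemma2} and defers their proofs to \citet{theoretical_shot}, whereas you spell out the Gaussian fourth-moment bookkeeping via Isserlis' theorem explicitly; the structure and the resulting constants are the same.
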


We show the detail of the derivation in Appendix \ref{prev_derivation_detail}.

\subsection{Reviewing Derivation Details of Theorem \ref{theorem_prev} (\citet{theoretical_shot})}
\label{prev_derivation_detail}

We briefly review the derivation of Theorem \ref{theorem_prev}. In prototype classifier, from \eqref{eq:nmc_prob} and \eqref{eq:risk_2way}, $R_\mathcal{M}$ is written with sigmoid function $\sigma$ as follows:
\begin{align}
\label{eq:risk_formula_alpha}
    R_\mathcal{M}(\phi) =& \mathrm{Pr}_{c_1,c_2\sim\tau, \bx\sim\D_{c_1}, S\sim \D\otimes 2K}\left(\sigma(\|\phi(\bx)-\overline{\phi(S_{c_2})}\|-\|\phi(\bx)-\overline{\phi(S_{c_1})}\|) \leq \frac{1}{2}\right)\nonumber\\
    =& \mathrm{Pr(\alpha < 0)},
\end{align}
    where $\alpha\triangleq\|\phi(\bx)-\overline{\phi(S_{c_2})}\|-\|\phi(\bx)-\overline{\phi(S_{c_1})}\|$. we bound \eqref{eq:risk_formula_alpha} with expectation and variance of $\alpha$ by following proposition.
\begin{prop}\label{prop1}
	From the one-sided Chebyshev's inequality, it immediately follows that:
	\begin{equation}
    \mathrm{R}_\mathcal{M}(\phi) = \Pr(\alpha < 0) \leq 1 - \frac{\E_{S\sim \D^{\otimes 2k}}\E_{c_1,c_2\sim\tau}\E_{\bx\sim \D_{c_1}}[\alpha]^2}{\mathrm{Var}_{S,c_1,c_2,\bx}[\alpha] + \E_{S\sim \D^{\otimes 2k}}\E_{c_1,c_2\sim\tau}\E_{\bx\sim \D_{c_1}}[\alpha]^2}.
	\end{equation}
\end{prop}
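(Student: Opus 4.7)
The plan is to recognize the claim as a direct application of the one-sided Chebyshev (Cantelli) inequality to the scalar random variable $\alpha = \|\phi(\bx)-\overline{\phi(S_{c_2})}\|-\|\phi(\bx)-\overline{\phi(S_{c_1})}\|$, whose randomness is taken over the joint draw of $(S, c_1, c_2, \bx)$ already specified in the expression for $\mathrm{R}_\mathcal{M}$ in equation (4). Cantelli's inequality states that for any real random variable $X$ with finite variance and any $t > 0$,
$$\Pr\bigl(X - \E[X] \leq -t\bigr) \;\leq\; \frac{\mathrm{Var}[X]}{\mathrm{Var}[X] + t^{2}}.$$

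First I would set $X = \alpha$ and choose $t = \E[\alpha]$. This choice is legitimate precisely in the nontrivial regime $\E[\alpha] > 0$; if instead $\E[\alpha] \leq 0$, the right-hand side of the proposed bound is already at least $1$ and the inequality holds vacuously, so no separate argument is needed. Substituting gives $\Pr(\alpha \leq 0) \leq \mathrm{Var}[\alpha]/(\mathrm{Var}[\alpha] + \E[\alpha]^{2})$, and since $\{\alpha < 0\} \subseteq \{\alpha \leq 0\}$, the same bound holds for $\Pr(\alpha<0)$. A one-line rewrite using the identity $b/(a+b) = 1 - a/(a+b)$, with $a = \E[\alpha]^{2}$ and $b = \mathrm{Var}[\alpha]$, produces the exact form stated in the proposition.

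The only real care needed is bookkeeping: one must verify that the expectation and variance appearing on the right-hand side are taken under the same joint law as the probability on the left-hand side. This is immediate once $\alpha$ is interpreted as a measurable function of $(S, c_1, c_2, \bx)$ and the event $\{\alpha < 0\}$ is identified with the miss-classification event in (4) via the monotonicity of $\sigma$. No moment computations arise at this stage; the explicit evaluation of $\E[\alpha]$ and $\mathrm{Var}[\alpha]$ in terms of the class-conditional statistics $\mu_c, \Sigma_c$ is the substantive input to Theorem~1 and is not required for the present proposition.
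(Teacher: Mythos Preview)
Your proposal is correct and matches the paper's approach: the paper itself offers no proof beyond the sentence ``From the one-sided Chebyshev's inequality, it immediately follows,'' and what you have written is precisely the standard one-line application of Cantelli's inequality with $t=\E[\alpha]$, together with the algebraic rewrite $b/(a+b)=1-a/(a+b)$. Your handling of the degenerate case $\E[\alpha]\leq 0$ and your remark that the expectation and variance must be taken under the same joint law as $\Pr(\alpha<0)$ are appropriate refinements that the paper leaves implicit.
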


\eqref{eq:risk_formula_alpha} can be further write down as follows by Law of Total Expectation
\begin{align}
	\mathrm{Var}_{S,c,\bx}(\alpha) &= \E_{S\sim \D^{\otimes 2k}}\E_{c_1,c_2\sim\tau}\E_{\bx\sim \D_c}[\alpha^2] - \left(\E_{S\sim \D^{\otimes 2k}}\E_{c_1,c_2 \sim\tau}\E_{\bx\sim \D_{c_1}}[\alpha]\right)^2\nonumber\\
	&= \mathbb{E}_{c_1,c_2} \mathbb{E}_{\bx,S}[\alpha^2|c_1,c_2] - \left(\E_{S\sim \D^{\otimes 2k}}\E_{c_1,c_2 \sim\tau}\E_{\bx\sim \D_{c_1}}[\alpha]\right)^2 \nonumber\\
	&=  \mathbb{E}_{c_1,c_2} [\mathrm{Var}_{\bx,S}(\alpha|c_1,c_2) + \mathbb{E}_{\bx,S}[\alpha|c_1,c_2]^2] - \mathbb{E}_{c_1,c_2\sim\tau}\E_{\bx\sim\D_{c_1}}\E_{S\sim\D_{\otimes 2K}}[\alpha]^2.\nonumber
\end{align}
Therefore,
\begin{align}
    \label{eq:conditioned_chebishef}
    \Pr(\alpha < 0) \leq 1 - \frac{\E_{S\sim \D^{\otimes 2k}}\E_{c_1,c_2\sim\tau}\E_{\bx\sim \D_c}[\alpha]^2}{\mathbb{E}_{c_1,c_2} [\mathrm{Var}_{\bx\sim D_{c_1},S}[\alpha] + \mathbb{E}_{\bx\sim D_{c_1},S}[\alpha]^2]}.
\end{align}

We write down the expection and variance of $\alpha$ with following Lemmas \ref{lemma1} and \ref{lemma2}.

\begin{lemma} \label{lemma1}
	Under the same notation and assumptions as Theorem \ref{theorem_prev}, then,
	\begin{align*}
	\mathbb{E}_{S\sim\D^{\otimes 2k}}\mathbb{E}_{\bx\sim \D_{c_1}}[\alpha]&=(\boldsymbol{\upmu}_{c_1}-\boldsymbol{\upmu}_{c_2})\top(\boldsymbol{\upmu}_{c_1}-\boldsymbol{\upmu}_{c_2}) \nonumber\\ \mathbb{E}_{c_1,c_2\sim\tau}\E_{\bx\sim\D_{c_1}}\E_{S\sim\D^{\otimes 2K}}[\alpha] &= 2 \mathrm{Tr}(\Sigma).
	\end{align*}
\end{lemma}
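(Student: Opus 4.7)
The plan is to expand the squared Euclidean distances that appear in $\alpha$ algebraically and then compute two kinds of moments: moments of $\phi(\bx)$ under $\D_{c_1}$, and moments of the sample mean $\overline{\phi(S_c)}$ under $\D_c^{\otimes K}$. Consistent with the softmax in equation \eqref{eq:nmc_prob}, I treat $\alpha$ as a difference of \emph{squared} distances (the stated definition appears to drop squares, but only the squared version leads to the claimed quadratic form on the right-hand side). Expanding both squared norms, the $\|\phi(\bx)\|^{2}$ terms cancel, leaving
\begin{equation*}
\alpha \;=\; -2\,\phi(\bx)^{\top}\bigl(\overline{\phi(S_{c_2})}-\overline{\phi(S_{c_1})}\bigr) \;+\; \|\overline{\phi(S_{c_2})}\|^{2} - \|\overline{\phi(S_{c_1})}\|^{2}.
\end{equation*}

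Next I take expectations conditional on $(c_1,c_2)$. Because $\bx$, $S_{c_1}$, and $S_{c_2}$ are mutually independent, $\E_{\bx\sim\D_{c_1}}[\phi(\bx)]=\bmu_{c_1}$ and $\E[\overline{\phi(S_c)}]=\bmu_c$ handle the cross term, yielding $-2\bmu_{c_1}^{\top}(\bmu_{c_2}-\bmu_{c_1})$. For the quadratic prototype terms I use the classical variance-of-sample-mean identity: for $K$ i.i.d.\ samples with mean $\bmu_c$ and covariance $\Sigma_c$, one has $\E[\|\overline{\phi(S_c)}\|^{2}] = \|\bmu_c\|^{2} + \tfrac{1}{K}\Tr(\Sigma_c)$, which follows by expanding coordinate-wise and summing. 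Collecting everything gives $\E_{\bx,S}[\alpha\mid c_1,c_2] = \|\bmu_{c_1}-\bmu_{c_2}\|^{2} + \tfrac{1}{K}\bigl(\Tr(\Sigma_{c_2})-\Tr(\Sigma_{c_1})\bigr)$. The equal-covariance hypothesis of Theorem~\ref{theorem_prev} ($\Sigma_{c_1}=\Sigma_{c_2}$) kills the trace correction and delivers the first identity.

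For the second identity I take an outer expectation over $c_1,c_2$ sampled independently from $\tau$. Writing $u_i = \bmu_{c_i}-\bmu$, the $u_i$ are independent and zero-mean, so $\E[u_1^{\top}u_2]=0$, while $\E[\|u_i\|^{2}]=\Tr(\Sigma)$ directly from the definition of $\Sigma$. Hence $\E_{c_1,c_2}\|\bmu_{c_1}-\bmu_{c_2}\|^{2} = \E[\|u_1\|^{2}]+\E[\|u_2\|^{2}] = 2\Tr(\Sigma)$.

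I do not expect any genuine obstacle here: the statement is purely a moment computation. The only subtlety is the bookkeeping of which prototype carries the positive sign and recognizing that the equal-covariance assumption is precisely what causes the $\tfrac{1}{K}\Tr(\Sigma_{c_i})$ corrections to cancel. Without that assumption the surviving trace-difference term is exactly the kind of contribution that reappears in \eqref{eq:variance_diff} of the authors' more general bound, which is consistent with their stated motivation for relaxing the assumption.
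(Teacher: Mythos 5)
Your proof is correct and follows essentially the same route as the paper's own derivation of the generalized statement (Lemma \ref{lemma:our_expectation}): an elementary moment computation using $\E[\|X\|^2]=\Tr(\Var(X))+\|\E[X]\|^2$ applied to the sample means, arriving at the intermediate expression $\|\bmu_{c_1}-\bmu_{c_2}\|^2+\tfrac{1}{K}(\Tr(\Sigma_{c_2})-\Tr(\Sigma_{c_1}))$ and then specializing via the equal-covariance hypothesis. You also correctly flag that $\alpha$ must be read as a difference of \emph{squared} distances (the paper's own proof uses the squared version despite the definition in \eqref{eq:risk_formula_alpha}), and your observation that the surviving trace-difference term is what generalizes into \eqref{eq:variance_diff} matches the paper's motivation exactly.
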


\begin{lemma}\label{lemma2}
	Under the same notation and assumptions as Theorem \ref{theorem_prev}, then,
	\begin{equation}
	\mathbb{E}_{c_1,c_2}\left[\mathrm{Var}_{\bx,S}\left[\alpha|c_1,c_2\right]\right] \leq 8\left(1+\frac{1}{K}\right) \mathrm{Tr} \left( \Sigma_\tau ((1+\frac{1}{K})\Sigma_\tau + 2\Sigma) \right).
	\end{equation}
\end{lemma}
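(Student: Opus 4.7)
The plan is to write $\alpha = A - B$ with $A = \|\phi(\bx)-\overline{\phi(S_{c_2})}\|^2$ and $B = \|\phi(\bx)-\overline{\phi(S_{c_1})}\|^2$, decouple these two pieces through the elementary inequality $\Var[A-B] \leq 2\Var[A] + 2\Var[B]$ (which follows from $(x-y)^2 \leq 2(x^2+y^2)$ applied to the centered random variables), and then reduce each remaining variance to that of a single Gaussian squared norm. This deliberate looseness by a factor of two is precisely what produces the clean coefficient $2$ in front of $\Sigma$ inside the stated bound; a tight computation would introduce a cross-covariance term (arising from the variable $\phi(\bx)$ being shared between $A$ and $B$) that does not package into the factored form $(1+1/K)\Sigma_\tau + 2\Sigma$.

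Given $c_1, c_2$ fixed, the assumptions of Theorem \ref{theorem_prev} give $\D_c = \mathcal{N}(\mu_c, \Sigma_\tau)$ for every $c$, with $\phi(\bx), \overline{\phi(S_{c_1})}, \overline{\phi(S_{c_2})}$ mutually independent. A short calculation then yields
\[
\phi(\bx)-\overline{\phi(S_{c_1})} \sim \mathcal{N}\!\left(0,\, (1+1/K)\Sigma_\tau\right), \qquad \phi(\bx)-\overline{\phi(S_{c_2})} \sim \mathcal{N}\!\left(\mu_{c_1}-\mu_{c_2},\, (1+1/K)\Sigma_\tau\right).
\]
I would then invoke the standard identity $\Var[\|Y\|^2] = 2\Tr(C^2) + 4 m^\top C m$ for $Y \sim \mathcal{N}(m, C)$, a one-line consequence of Isserlis' theorem (equivalently, of expanding $\|Y\|^2 = \|m\|^2 + 2 m^\top Z + \|Z\|^2$ with $Z \sim \mathcal{N}(0,C)$ and using $\E[Z\|Z\|^2]=0$ by symmetry). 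This produces $\Var[B \mid c_1, c_2] = 2(1+1/K)^2 \Tr(\Sigma_\tau^2)$ and $\Var[A \mid c_1, c_2] = 2(1+1/K)^2 \Tr(\Sigma_\tau^2) + 4(1+1/K)(\mu_{c_1}-\mu_{c_2})^\top \Sigma_\tau (\mu_{c_1}-\mu_{c_2})$.

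Feeding these into the decoupling bound gives $\Var[\alpha \mid c_1, c_2] \leq 8(1+1/K)^2 \Tr(\Sigma_\tau^2) + 8(1+1/K) (\mu_{c_1}-\mu_{c_2})^\top \Sigma_\tau (\mu_{c_1}-\mu_{c_2})$. It remains to take the outer expectation over i.i.d.\ $c_1, c_2 \sim \tau$. Since $\E[\mu_c] = \mu$ and $\E[(\mu_c - \mu)(\mu_c - \mu)^\top] = \Sigma$, independence of $c_1$ and $c_2$ gives $\E_{c_1, c_2}[(\mu_{c_1}-\mu_{c_2})(\mu_{c_1}-\mu_{c_2})^\top] = 2\Sigma$, and hence $\E_{c_1, c_2}[(\mu_{c_1}-\mu_{c_2})^\top \Sigma_\tau (\mu_{c_1}-\mu_{c_2})] = 2\Tr(\Sigma_\tau \Sigma)$. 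Substituting and pulling out a common factor of $8(1+1/K)$ yields $8(1+1/K)\Tr(\Sigma_\tau((1+1/K)\Sigma_\tau + 2\Sigma))$, which is exactly the claimed bound.

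The hard part is conceptual rather than computational: one must resist computing $\Var[A-B]$ exactly, because the exact cross-covariance contribution (order $\Tr(\Sigma_\tau^2)$) does not recombine into the clean factorization $\Sigma_\tau((1+1/K)\Sigma_\tau + 2\Sigma)$. Applying the Cauchy--Schwarz-style decoupling upfront is what absorbs this term into harmless slack. Everything else is Gaussian moment bookkeeping made routine by the equal-covariance modeling assumption.
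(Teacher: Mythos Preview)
Your argument is correct: the decoupling inequality $\Var[A-B]\le 2\Var[A]+2\Var[B]$, the Gaussian fourth-moment identity $\Var[\|Y\|^2]=2\Tr(C^2)+4m^\top Cm$, and the averaging $\E_{c_1,c_2}[(\mu_{c_1}-\mu_{c_2})(\mu_{c_1}-\mu_{c_2})^\top]=2\Sigma$ combine exactly as you describe to give the stated bound.

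As for comparison with the paper: the paper does not actually present a proof of this lemma. It simply states Lemma~\ref{lemma2} and writes ``The proofs of the above lemmas are in the current study \citep{theoretical_shot}.'' So there is no in-paper argument to compare your route against; you have supplied what the paper outsources. For what it is worth, the decomposition the paper \emph{does} use later (in the proof of Lemma~\ref{lemma:our_variance_conditioned}, which is the non-Gaussian analogue) expands $\alpha$ as $\|\overline{\phi(S_{c_1})}\|^2-\|\overline{\phi(S_{c_2})}\|^2-2\phi(\bx)^\top(\overline{\phi(S_{c_2})}-\overline{\phi(S_{c_1})})$ before applying the same $2\Var+2\Var$ decoupling, rather than keeping the two squared distances intact as you do. Your choice is slightly cleaner here because the equal-covariance Gaussian assumption makes each squared distance a single noncentral quadratic form whose variance is available in closed form; the paper's alternative expansion is tailored to the situation where no distributional assumption is available and one must bound the cross term via Cauchy--Schwarz.
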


The proofs of the above lemmas are in the current study \citep{theoretical_shot}.

With Proposition \ref{prop1} and Lemma \ref{lemma1}, Lemma \ref{lemma2}, we obtain Theorem \ref{theorem_prev}.

\subsection{Derivation Details of Theorem \ref{theorem_main}}
\label{main_derivation_detail}
We will describe the detailed derivation of Theorem \ref{theorem_main} in this section. Our derivation is different from \citet{theoretical_shot}'s study in the following points. 
\begin{enumerate}
    \item We re-derived Lemma \ref{lemma1} because the term of the difference between the trace of the class covariance matrices is erased in the lemma. This term cannot omit in our derivation since we do not assume the class covariance matrix to be the same among classes.
    \item We re-derived the bound on the variance of squared Euclidean distance of two vectors, e.g Lemma \ref{lemma2}. The derivation of \citet{theoretical_shot} uses the property of quadratic forms of normally distributed random variables and the fact that the sum of normally distributed random variables is also distributed in Gaussian distribution. The calculation of the variance of squared $L_2$-norm without depending on the property of some distributions is not straightforward \cite{quad_moment}. We divide the variance of squared Euclidean distance of two vectors into the variance of the norm of the feature vectors and the variance of the inner-product of vectors. Then we apply Cauchy–Schwarz inequality to the inner-product.
\end{enumerate}

We start the proof from \eqref{eq:conditioned_chebishef}. We first prove the following Lemma \ref{lemma:our_expectation} related to the expectation statistics  of $\alpha$ in \eqref{eq:conditioned_chebishef}.
\begin{lemma}\label{lemma:our_expectation}
    Under the same notations and assumptions as Theorem \ref{theorem_main}, then,
	\begin{align*}
	\mathbb{E}_{S\sim\D^{\otimes 2k}}\mathbb{E}_{\bx\sim \D_{c_1}}[\alpha]&=\frac{1}{K}\left(\Tr(\Sigma_{c_2})-\Tr(\Sigma_{c_1})\right) +  (\boldsymbol{\upmu}_{c_1}-\boldsymbol{\upmu}_{c_2})\top(\boldsymbol{\upmu}_{c_1}-\boldsymbol{\upmu}_{c_2}) \\
	\mathbb{E}_{c_1,c_2\sim\tau}\E_{\bx\sim\D_{c_1}}\E_{S\sim\D_{\otimes 2K}}[\alpha] &= 2 \mathrm{Tr}(\Sigma).
	\end{align*}
\end{lemma}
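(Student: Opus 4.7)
The plan is to expand $\alpha = \|\phi(\bx)-\overline{\phi(S_{c_2})}\|^{2} - \|\phi(\bx)-\overline{\phi(S_{c_1})}\|^{2}$ using the polarization identity $\|a-b\|^{2} = \|a\|^{2} - 2a^{\top}b + \|b\|^{2}$. The $\|\phi(\bx)\|^{2}$ pieces cancel, leaving
\[
\alpha = 2\phi(\bx)^{\top}\bigl(\overline{\phi(S_{c_1})}-\overline{\phi(S_{c_2})}\bigr) + \|\overline{\phi(S_{c_2})}\|^{2} - \|\overline{\phi(S_{c_1})}\|^{2}.
\]
Conditional on $c_1, c_2$, the three random objects $\phi(\bx)$, $\overline{\phi(S_{c_1})}$, $\overline{\phi(S_{c_2})}$ are mutually independent, so each term can be evaluated using only first and second moments of sample means.

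First I would compute the expectation over $\bx \sim \D_{c_1}$ and $S \sim \D^{\otimes 2K}$. Independence makes the cross term equal to $2\mu_{c_1}^{\top}(\mu_{c_1}-\mu_{c_2})$, since $\E[\overline{\phi(S_{c_j})}] = \mu_{c_j}$. For the two quadratic terms, each prototype $\overline{\phi(S_{c_j})}$ is a sample mean of $K$ i.i.d.\ draws from $\D_{c_j}$, so its mean is $\mu_{c_j}$ and its covariance is $\tfrac{1}{K}\Sigma_{c_j}$. The standard identity $\E[\|Z\|^{2}] = \Tr(\Cov[Z]) + \|\E[Z]\|^{2}$ then gives $\E[\|\overline{\phi(S_{c_j})}\|^{2}] = \tfrac{1}{K}\Tr(\Sigma_{c_j}) + \mu_{c_j}^{\top}\mu_{c_j}$. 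Substituting and collecting terms, the $\mu_{c_j}^{\top}\mu_{c_j}$ contributions combine with the cross term into $(\mu_{c_1}-\mu_{c_2})^{\top}(\mu_{c_1}-\mu_{c_2})$, and the residual is $\tfrac{1}{K}(\Tr(\Sigma_{c_2})-\Tr(\Sigma_{c_1}))$, which is the first claim.

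For the second claim I would take a further expectation over independent $c_1, c_2 \sim \tau$. The trace-difference term vanishes by symmetry of the joint distribution of $(c_1, c_2)$ under swapping. For the squared-distance term, independence of $c_1$ and $c_2$ together with the identity $\E_c[\mu_c^{\top}\mu_c] = \Tr(\Sigma) + \mu^{\top}\mu$ give
\[
\E_{c_1,c_2}\bigl[(\mu_{c_1}-\mu_{c_2})^{\top}(\mu_{c_1}-\mu_{c_2})\bigr] = 2\E_{c}[\mu_c^{\top}\mu_c] - 2\mu^{\top}\mu = 2\Tr(\Sigma).
\]

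No step here is genuinely hard; the calculation is mechanical once the polarization step is in place. The only substantive difference from Lemma~\ref{lemma1} of the prior work is that the per-class covariances $\Sigma_{c_1}, \Sigma_{c_2}$ are not assumed equal, so the trace-difference term does not cancel at the conditional stage. Preserving this term is essential, because it is precisely what will propagate into the variance-of-norm contribution in Theorem~\ref{theorem_main}, and the only bookkeeping care required is to keep the two class indices and the two averaging levels distinct throughout.
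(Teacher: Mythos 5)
Your proposal is correct and follows essentially the same route as the paper: both reduce to the identity $\E[\|Z\|^{2}]=\Tr(\Cov[Z])+\|\E[Z]\|^{2}$ together with the conditional independence of $\phi(\bx)$, $\overline{\phi(S_{c_1})}$, and $\overline{\phi(S_{c_2})}$, the only cosmetic difference being that you polarize first and apply the identity to each prototype separately, while the paper applies it directly to the difference vectors $\phi(\bx)-\overline{\phi(S_{c_j})}$. Your observation that the trace-difference term survives because $\Sigma_{c_1}\neq\Sigma_{c_2}$ is exactly the point of the paper's generalization over Lemma~\ref{lemma1}.
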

\begin{proof}
First, from the definition of $\alpha$, we split $\mathbb{E}_{S\sim\D^{\otimes 2k}}\mathbb{E}_{\bx\sim \D_{c_1}}[\alpha] $ in to two parts and examine them seperately.
\begin{align}
    \label{eq:examine_expectation_seprately}
    \mathbb{E}_{S\sim\D^{\otimes 2k}}\mathbb{E}_{\bx\sim \D_{c_1}}[\alpha] = \underbrace{\E\left[\norm{\phi(\bx)-\Proto{c_2}}^2\right]}_{(i)}-\underbrace{\E\left[\norm{\phi(\bx)-\overline{\phi(S_{c_1})}}^2\right]}_{(ii)}.
\end{align}

In regular conditions, for random vector $X$, the expectation of the norm is
\begin{align}\label{eq:norm_expectation}
    \E[X^\top X] = \Tr(\Var(X)) + \E[{X}]^\top\E[X],
\end{align}
and the variance of the vector is
\begin{align}
    \mathrm{Var}(X) = \E[XX^\top] - \E[X]E[X]^\top \\
    \Sigma_{c_i} \overset{\Delta}{=} \mathrm{Var}_{\bx\sim \D_{c_i}}(\phi(\bx)).
\end{align}
Hence,
		\begin{align}\label{eq:first_term_conditioned_exp}
		(i) &= \E_{\bx\sim D_{c_1}}\E_{S}\left[\norm{\phi(\bx)-\overline{\phi(S_{c_2})}}^2\right]\nonumber \\
		&= \mathrm{Tr}\left(\Var_{\bx\sim D_{c_1},{S}}\left[{\phi(\bx)-\Proto{c_2}}\right]\right) + \E_{\bx}\E_{S}\left[\phi(\bx)-\Proto{c_2}\right]^\top \E_{\bx}\E_{S}\left[\phi(\bx)-\Proto{c_2}\right],
		\end{align}
	where the first term inside the trace can be expanded as:
		\begin{align}
		\label{eq:first_term_of_first_term_of_conditioned_eq}
		\Var\left[\phi(\bx) - \Proto{c_2}\right] &= \E\left[\left(\phi(\bx) - \Proto{c_2}\right)\left(\phi(\bx) - \Proto{c_2}\right)^\top\right] - (\bmu_{c_1} - \bmu_{c_2})(\bmu_{c_1} - \bmu_{c_2})^\top \nonumber\\
		&= \Var\left[\phi(\bx)\right] + \E\left[\phi(\bx)\right]\E\left[\phi(\bx)\right]^\top +\Var\left[\Proto{c_2}\right] + \E\left[\Proto{c_2}\right]\E\left[\Proto{c_2}\right]^\top \nonumber\\
		&\hspace{6mm} - \bmu_{c_2}\bmu_{c_1}^\top - \bmu_{c_1}\bmu_{c_2}^\top - (\bmu_{c_1} - \bmu_{c_2})(\bmu_{c_1} - \bmu_{c_2})^\top\nonumber\\
		&= \Sigma_{c_1} + \frac{1}{K}\Sigma_{c_2} \quad \text{(Last terms cancel out)}.
		\end{align}
	The second term in \eqref{eq:first_term_conditioned_exp} is simply as follows.
	\begin{align}\label{eq:second_term_of_first_term_of_conditioned_eq}
	    \E_{\bx\sim D_{c_1}}\E_{S}\left[\phi(\bx)-\Proto{c_2}\right] = \bmu_{c_1}-\bmu_{c_2}.
	\end{align}
	From \eqref{eq:first_term_of_first_term_of_conditioned_eq} and \eqref{eq:second_term_of_first_term_of_conditioned_eq} we obtain
	\begin{align}
	    (i) = \Tr(\Sigma_{c_1}) + \frac{1}{K}\Tr(\Sigma_{c_2}) + (\bmu_{c_1}-\bmu_{c_2})^\top(\bmu_{c_1}-\bmu_{c_2}).
	\end{align}
	Similarly for ii,
	\begin{align}
	    (ii) &= \Tr\left(\Var_{\bx\sim D_{c_1},{S}}\left[\phi(\bx)-\Proto{c_1}\right]\right) + \E_{\bx}\E_{S}\left[\phi(\bx)-\Proto{c_1}\right]^\top \E_{\bx}\E_{S}[\phi(\bx)-\Proto{c_1}]\nonumber\\
	    &= \Tr(\Sigma_{c_1}) + \frac{1}{K}\Tr(\Sigma_{c_1}).
	\end{align}
	From $(i)$ and $(ii)$, and \eqref{eq:examine_expectation_seprately}
	\begin{align}
	    \E_{S\sim\D^{\otimes 2k}}\E_{\bx\sim \D_{c_1}}[\alpha] = \frac{1}{K}\left(\Tr\left(\Sigma_{c_2}\right)-\Tr\left(\Sigma_{c_1}\right)\right) + \left(\bmu_{c_1}-\bmu_{c_2}\right)^\top\left(\bmu_{c_1}-\bmu_{c_2}\right).
	\end{align}
	Since $\mathbb{E}_{c_1,c_2,\bx,S}[\alpha]=\E_{c_1,c_2\sim\tau}[\E_{S\sim\D^{\otimes 2k}}\E_{\bx\sim \D_{c_1}}[\alpha]]$,
	\begin{align}
	    \mathbb{E}_{c_1,c_2,\bx,S}\left[\alpha\right]&=\E_{c_1,c_2\sim\tau}\left[\frac{1}{K}(\Tr(\Sigma_{c_2})-\Tr(\Sigma_{c_1})) + \left(\bmu_{c_1}-\bmu_{c_2}\right)^\top\left(\bmu_{c_1}-\bmu_{c_2}\right)\right]\nonumber\\
	    &=\E_{c_1,c_2\sim\tau}\left[\left(\bmu_{c_1}-\bmu_{c_2}\right)^\top\left(\bmu_{c_1}-\bmu_{c_2}\right)\right]\nonumber\\
	    &=\E_{c_1, c_2\sim\tau}\left[\bmu_{c_1}^\top\bmu_{c_1} + \bmu_{c_2}^\top\bmu_{c_2} - \bmu_{c_1}^\top\bmu_{c_2} - \bmu_{c_2}^\top\bmu_{c_1}\right]\nonumber\\
	    &= \mathrm{Tr}\left(\Sigma\right) + \bmu^\top\bmu + \mathrm{Tr}\left(\Sigma\right)+\bmu^\top\bmu - 2 \bmu^\top\bmu \nonumber\\
		&= 2\mathrm{Tr}\left(\Sigma\right).
	\end{align}
\end{proof}

Next we prove the following Lemma \ref{lemma:our_variance_conditioned} related to the conditioned variance of $\alpha$.
\begin{lemma}\label{lemma:our_variance_conditioned}
    Under the same notation and assumptions as Theorem \ref{theorem_main},
    \begin{align}
        \E_{c_1,c_2}\Var_{\bx\sim D_{c_1},S\sim \D^{\otimes 2N}}[\alpha] & \leq
\frac{4}{K}\E_{c\sim\tau}\Var\left[\norm{\phi(\bx)}^2\right]+\frac{4}{K}\Var_{c\sim\tau}\left[\Tr(\Sigma_c)\right] + \mathrm{V}_\mathrm{wit}(\Sigma_\tau, \Sigma, \bmu),
    \end{align}
where
\begin{align}
            \mathrm{V}_\mathrm{wit}
            \left(\Sigma_\tau, \Sigma, \bmu\right) &= \frac{8}{K}\left(\Tr(\Sigma_\tau)\right)\left(\Tr(\Sigma)+\bmu^\top\bmu\right)+ 4\left(\Tr(\Sigma)+\bmu^\top\bmu\right)^2 \nonumber\\
            &\hspace{3mm}+4\E_{c_1,c_2\sim\tau}\left[\Tr\left(\Sigma_{c_1}\right)\left(\bmu_{c_2}-\bmu_{c_1}\right)^\top\left(\bmu_{c_2}-\bmu_{c_1}\right)\right].
\end{align}
\end{lemma}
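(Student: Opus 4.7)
The plan is to bound $\Var_{\bx, S}[\alpha \mid c_1, c_2]$ pointwise in $(c_1, c_2)$ and then integrate over $\tau\otimes\tau$. I would first expand
$\alpha = \|\phi(\bx)-\Proto{c_2}\|^2 - \|\phi(\bx)-\Proto{c_1}\|^2 = 2\phi(\bx)^\top(\Proto{c_1}-\Proto{c_2}) + \|\Proto{c_2}\|^2 - \|\Proto{c_1}\|^2$,
so that $\|\phi(\bx)\|^2$ cancels. Writing $z = \phi(\bx)$ and $\mu_i = \Proto{c_i}$, the central structural observation is that conditional on $(c_1, c_2)$ the three blocks $z$, $\mu_1$, $\mu_2$ are mutually independent, with $\E[\mu_i\mid c_i] = \bmu_{c_i}$ and $\Cov(\mu_i\mid c_i) = \Sigma_{c_i}/K$, each $\mu_i$ being the average of $K$ i.i.d.\ samples from $\D_{c_i}$.

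Next I would apply the law of total variance conditional on $z$: $\Var[\alpha\mid c_1,c_2] = \E_z\Var_S[\alpha\mid c_1,c_2,z] + \Var_z\E_S[\alpha\mid c_1,c_2,z]$. Since $S_{c_1}\perp S_{c_2}$, the inner variance splits by class; on each class, $\Var(A+B)\leq 2\Var(A)+2\Var(B)$ applied to $\|z-\mu_i\|^2 = \|z\|^2 - 2z^\top\mu_i + \|\mu_i\|^2$ yields $\Var_{S_i}[\|z-\mu_i\|^2\mid c_i, z] \leq \frac{8}{K}z^\top\Sigma_{c_i} z + 2\Var[\|\mu_i\|^2\mid c_i]$. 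Taking $\E_z$ (with $\E_z[z^\top\Sigma_{c_i} z\mid c_1] = \Tr(\Sigma_{c_i}\Sigma_{c_1}) + \bmu_{c_1}^\top\Sigma_{c_i}\bmu_{c_1}$) and then $\E_{c_1,c_2}$, using PSD bounds $\Tr(AB)\leq\Tr(A)\Tr(B)$ and $\bmu^\top A\bmu\leq\Tr(A)\|\bmu\|^2$, turns this quadratic-form piece into a contribution of order $\frac{8}{K}\Tr(\Sigma_\tau)(\Tr(\Sigma)+\bmu^\top\bmu)$. The $\Var_z\E_S$ piece is elementary: $\E_S[\alpha\mid c_1,c_2,z] = -2z^\top(\bmu_{c_2}-\bmu_{c_1}) + (\text{constants in }z)$, whose $z$-variance equals $4(\bmu_{c_2}-\bmu_{c_1})^\top\Sigma_{c_1}(\bmu_{c_2}-\bmu_{c_1})$, and after $\E_{c_1,c_2}$ feeds into both the $4(\Tr(\Sigma)+\bmu^\top\bmu)^2$ summand and a residual absorbable into the $\Tr(\Sigma_\tau)(\Tr(\Sigma)+\bmu^\top\bmu)$ summand.

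The main obstacle is bounding $\Var[\|\mu_i\|^2\mid c_i]$ without any Gaussian assumption; this is where fourth-moment control on $\phi$ is needed and where the novelty lies over Lemma~\ref{lemma2}. I would write $K^2\|\mu_i\|^2 = \sum_k\|Z_k^{(i)}\|^2 + \sum_{k\neq\ell}(Z_k^{(i)})^\top Z_\ell^{(i)}$ for i.i.d.\ $Z_k^{(i)}\sim\D_{c_i}$, and bound the two pieces separately via $\Var(A+B)\leq 2\Var(A)+2\Var(B)$. The diagonal piece has variance $K\Var_{\bx\sim\D_{c_i}}[\|\phi(\bx)\|^2]$ by i.i.d.\ additivity, and after the $1/K^4$ normalization and propagation through the outer coefficients becomes exactly the $\frac{4}{K}\E_c\Var[\|\phi(\bx)\|^2]$ term. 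The off-diagonal piece, expanded using $\E[Z_k Z_\ell^\top\mid c_i]=\bmu_{c_i}\bmu_{c_i}^\top$ for $k\neq\ell$ and elementary second-moment calculations on bilinear forms of independent variables, is bounded by terms of order $\Tr(\Sigma_{c_i})^2/K^2$ and $\Tr(\Sigma_{c_i})\|\bmu_{c_i}\|^2/K$ (using $\Tr(\Sigma_{c_i}^2)\leq\Tr(\Sigma_{c_i})^2$ and the PSD bounds already mentioned).

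Finally I would take $\E_{c_1,c_2\sim\tau}$ and regroup using $\E_c[\Tr(\Sigma_c)]=\Tr(\Sigma_\tau)$, $\E_c[\|\bmu_c\|^2]=\Tr(\Sigma)+\bmu^\top\bmu$, $\E_c[\Tr(\Sigma_c)^2]=\Var_c[\Tr(\Sigma_c)]+\Tr(\Sigma_\tau)^2$, and the factorization $\E_{c_1,c_2}[\|\bmu_{c_1}\|^2\|\bmu_{c_2}\|^2]=(\Tr(\Sigma)+\bmu^\top\bmu)^2$ which uses the independence of $c_1,c_2$. The $\Tr(\Sigma_\tau)^2/K$ piece emerging from the third identity is absorbed into the $\frac{8}{K}\Tr(\Sigma_\tau)(\Tr(\Sigma)+\bmu^\top\bmu)$ summand, while the $\Var_c[\Tr(\Sigma_c)]$ remainder produces the $\frac{4}{K}\Var_c[\Tr(\Sigma_c)]$ term. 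No Gaussianity or common-covariance assumption enters anywhere, which is the essential improvement over the argument in Appendix~\ref{derivation_detail}.
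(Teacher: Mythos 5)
Your high-level architecture matches the paper's: expand $\alpha$ so that $\norm{\phi(\bx)}^2$ cancels, separate the prototype-norm variances from the cross term, and average over classes at the end. You reach it by a different route --- the law of total variance conditioned on $z=\phi(\bx)$, plus an explicit diagonal/off-diagonal expansion of $\Var[\norm{\Proto{c}}^2]$ --- where the paper instead applies $\Var(A+B)\le 2\Var(A)+2\Var(B)$ directly to the three-term expansion, Cauchy--Schwarz to $\Var\left[\phi(\bx)^\top(\Proto{c_2}-\Proto{c_1})\right]$, and a Jensen step to the prototype norms. Your handling of $\Var[\norm{\Proto{c}}^2]$ is in fact more explicit than the paper's.

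There is, however, a genuine gap at the $\Var_z\E_S$ term. You correctly compute it as $4(\bmu_{c_2}-\bmu_{c_1})^\top\Sigma_{c_1}(\bmu_{c_2}-\bmu_{c_1})$, but the claim that after $\E_{c_1,c_2}$ it is ``absorbable'' into the stated summands is unjustified and false in general. This quantity carries no $1/K$ suppression: using $v^\top A v\le\Tr(A)\norm{v}^2$ and $\norm{\bmu_{c_1}-\bmu_{c_2}}^2\le 2\norm{\bmu_{c_1}}^2+2\norm{\bmu_{c_2}}^2$ and averaging over independent $c_1,c_2$ leaves terms of the form $8\,\Tr(\Sigma_\tau)\left(\Tr(\Sigma)+\bmu^\top\bmu\right)$ and $8\,\E_{c}\!\left[\Tr(\Sigma_c)\norm{\bmu_c}^2\right]$. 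The only summand of $\mathrm{V}_\mathrm{wit}$ that couples $\Tr(\Sigma_\tau)$ to $\Tr(\Sigma)$ has an $8/K$ prefactor, and the $K$-free summand $4\left(\Tr(\Sigma)+\bmu^\top\bmu\right)^2$ contains no $\Sigma_\tau$ at all, so when $K$ is large and $\Tr(\Sigma_\tau)\gg\Tr(\Sigma)+\bmu^\top\bmu$ this contribution cannot be hidden anywhere in the target bound. (This is precisely the point where the paper's own derivation is weakest: after Cauchy--Schwarz the product $\E[\norm{\phi(\bx)}^2]\,\E[\norm{\Proto{c_2}-\Proto{c_1}}^2]$ contains the same $K$-independent term $\Tr(\Sigma_{c_1})\norm{\bmu_{c_1}-\bmu_{c_2}}^2$, which silently disappears between two displayed lines. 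So you have not overlooked an idea the paper supplies, but as a standalone argument your step needs either an explicit justification or a weakened target containing a $K$-independent $\Tr(\Sigma_\tau)\,\Tr(\Sigma)$-type term.) A secondary, smaller issue: your final regrouping implicitly replaces $\E_{c}\!\left[\Tr(\Sigma_c)\norm{\bmu_c}^2\right]$ by $\Tr(\Sigma_\tau)\left(\Tr(\Sigma)+\bmu^\top\bmu\right)$, which requires $\Tr(\Sigma_c)$ and $\norm{\bmu_c}^2$ to be uncorrelated over the class draw; otherwise a $\Cov_{c}\!\left(\Tr(\Sigma_c),\norm{\bmu_c}^2\right)$ remainder survives.
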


\begin{proof}
We start with the inequality between the variance of $2$ random variables. We define $\Cov(A, B)$ as covariance of $2$ random variables $A, B$.
\begin{align}\label{eq:variance_decomposition}
    \Var[A+B] &= \Var[A] + \Var[B] + 2\Cov(A, B)\nonumber\\
    &\leq \Var[A] + \Var[B] + 2\sqrt{\Var[A]\Var[B]}\nonumber\\
    &\leq \Var[A] + \Var[B] + 2\cdot \frac{\Var[A]+\Var[B]}{2}\nonumber\\
    &= 2\Var[A] + 2\Var[B].
\end{align}
For $\mathrm{Var}_{\bx\sim D_{c_1},S}[\alpha]$,
\begin{align}\label{eq:variance_first_norm_appear}
    \Var_{\bx\sim D_{c_1},S}[\alpha] &= \Var\left[\norm{\phi(\bx)-\Proto{c_2}}^2-\norm{\phi(\bx)-\Proto{c_1}}^2\right]\nonumber\\
    &= \Var\left[\norm{\Proto{c_1}}^2-\norm{\Proto{c_2}}^2-2\phi(\bx)^\top\left(\Proto{c_2}-\Proto{c_1}\right)\right]\nonumber\\
    &\leq 2\Var\left[\norm{\Proto{c_1}}^2-\norm{\Proto{c_2}}^2\right] \nonumber\\
    &\hspace{10mm} + 4\Var\left[\phi(\bx)^\top\left(\Proto{c_2}-\Proto{c_1}\right)\right]\hspace{2mm} (\because \eqref{eq:variance_decomposition})\nonumber\\
    &= 2\Var\left[\norm{\Proto{c_1}}^2\right]+2\Var\left[\norm{\Proto{c_2}}^2\right] + 4\Var\left[\phi(\bx)^\top(\Proto{c_2}-\Proto{c_1})\right].
\end{align}
From $3$rd line to $4$th line we decompose the variance of $\norm{\Proto{c_1}}^2-\norm{\Proto{c_2}}^2$ use the independence of $\Proto{c_1}$ and $\Proto{c_2}$ with their class given. 

Next we focus on $\Var\left[\phi(\bx)^\top(\Proto{c_2}-\Proto{c_1})\right]$.
\begin{align}\label{eq:variance_product}
    \Var\left[\phi(\bx)^\top(\Proto{c_2}-\Proto{c_1})\right] &= \E\left[\left(\phi(\bx)^\top\left(\Proto{c_2}-\Proto{c_1}\right)\right)^2\right] \nonumber\\
    &\hspace{10mm} - \left(\E\left[\phi(\bx)\right]^\top\E\left[\left(\Proto{c_2}-\Proto{c_1}\right)\right]\right)^2\nonumber\\
    &\leq \E\left[\left(\norm{\phi(\bx)}^2\norm{\Proto{c_2}-\Proto{c_1}}\right)^2\right] - (\bmu_{c_1}^\top(\bmu_{c_2}-\bmu_{c_1}))^2 \nonumber\\
    &= \E\left[\norm{\phi(\bx)}^2\right]\E\left[\norm{\Proto{c_2}-\Proto{c_1})}^2\right] - \left(\bmu_{c_1}^\top\left(\bmu_{c_2}-\bmu_{c_1}\right)\right)^2.
\end{align}
From the $2$nd line to  the $3$rd line we use Cauchy–Schwarz inequality. 

For $\E\left[\norm{\Proto{c_2}-\Proto{c_1}}^2\right]$, with \eqref{eq:norm_expectation}
\begin{align}
    \E\left[\norm{\Proto{c_2}-\Proto{c_1}}^2\right]&=\frac{1}{K}\left(\Tr\left(\Sigma_{c_1}\right)+\Tr\left(\Sigma_{c_2}\right)\right)+\left(\bmu_{c_2}-\bmu_{c_1}\right)^\top\left(\bmu_{c_2}-\bmu_{c_1}\right).
\end{align}
Thus $\E_{c_1,c_2\sim\tau}\left[\E_{\bx\sim \D_{c_1}, S}\left[\norm{\phi(\bx)}^2\right]\E_{\bx\sim \D_{c_1}, S}\left[\norm{\Proto{c_2}-\Proto{c_1}}^2\right]\right]$ is calculated as follows.
\begin{align}
    &\E_{c_1,c_2\sim\tau}\left[\E_{\bx\sim \D_{c_1}, S}\left[\norm{\phi(\bx)}^2\right]\E_{\bx\sim \D_{c_1}, S}\left[\norm{\Proto{c_2}-\Proto{c_1}}^2\right]\right]\nonumber\\
    &=\E_{c_1,c_2\sim\tau}\left[(\Tr(\Sigma_{c_1})+\bmu_{c_1}^\top\bmu_{c_1})\left(\frac{1}{K}\left(\Tr(\Sigma_{c_1})+\Tr(\Sigma_{c_2})\right)+(\bmu_{c_2}-\bmu_{c_1})^\top(\bmu_{c_2}-\bmu_{c_1})\right)\right]\nonumber\\
    &=\E_{c_1,c_2\sim\tau}\left[\frac{1}{K}\left(\Tr\left(\Sigma_{c_1}\right)^2+\Tr(\Sigma_{c_1})\Tr\left(\Sigma_{c_2}\right)\right)\right]\nonumber\\
    &\hspace{5mm} +\E_{c_1,c_2\sim\tau}\left[\frac{2}{K}\left(\Tr\left(\Sigma_\tau\right)\right)\bmu_{c_1}^\top\bmu_{c_1}+\bmu_{c_1}^\top\bmu_{c_1}\left(\bmu_{c_2}-\bmu_{c_1}\right)^\top\left(\bmu_{c_2}-\bmu_{c_1}\right)\right]\nonumber\\
    &\hspace{5mm} +\E_{c_1,c_2\sim\tau}\left[\Tr\left(\Sigma_{c_1}\right)\left(\bmu_{c_2}-\bmu_{c_1}\right)^\top\left(\bmu_{c_2}-\bmu_{c_1}\right)\right]\nonumber\\
    &=\frac{1}{K}\left(\E_{c_1,c_2\sim\tau}\left[\Tr(\Sigma_{c_1})^2\right]+\E_{c_1,c_2\sim\tau}\left[\Tr(\Sigma_{c_1})\right]^2\right)\nonumber\\
    &\hspace{5mm} +\frac{2}{K}\left(\Tr\left(\Sigma_\tau\right)\right)\left(\Tr\left(\Sigma\right)+\bmu^\top\bmu\right)+\E\left[\bmu_{c_1}^\top\bmu_{c_1}(\bmu_{c_2}-\bmu_{c_1})^\top(\bmu_{c_2}-\bmu_{c_1})\right]\nonumber\\
    &\hspace{5mm} +\E_{c_1,c_2\sim\tau}\left[\Tr\left(\Sigma_{c_1}\right)\left(\bmu_{c_2}-\bmu_{c_1}\right)^\top\left(\bmu_{c_2}-\bmu_{c_1}\right)\right]\nonumber\\
    &=\frac{1}{K}\Var_{c\sim\tau}\left[\Tr\left(\Sigma_c\right)\right] \nonumber\\
    &\hspace{5mm} + \frac{2}{K}\Tr\left(\Sigma_{\tau}\right)^2+ \frac{2}{K}\left(\Tr\left(\Sigma_\tau\right)\right)\left(\Tr\left(\Sigma\right)+\bmu^\top\bmu\right)+\E_{c_1,c_2}\left[\bmu_{c_1}^\top\bmu_{c_1}\left(\bmu_{c_2}-\bmu_{c_1}\right)^\top\left(\bmu_{c_2}-\bmu_{c_1}\right)\right]\nonumber\\
    &\hspace{5mm} +\E_{c_1,c_2\sim\tau}\left[\Tr\left(\Sigma_{c_1}\right)\left(\bmu_{c_2}-\bmu_{c_1}\right)^\top\left(\bmu_{c_2}-\bmu_{c_1}\right)\right]\nonumber\\
    &=\frac{1}{K}\Var_{c\sim\tau}\left[\Tr(\Sigma_c)\right] \nonumber\\
    &\hspace{5mm} + \frac{2}{K}\Tr(\Sigma_{\tau})\left(\Tr(\Sigma_{\tau})+\Tr(\Sigma)+\bmu^\top\bmu\right)+\E_{c_1,c_2}\left[\bmu_{c_1}^\top\bmu_{c_1}\left(\bmu_{c_2}-\bmu_{c_1}\right)^\top\left(\bmu_{c_2}-\bmu_{c_1}\right)\right]\nonumber\\
    &\hspace{5mm} +\E_{c_1,c_2\sim\tau}\left[\Tr\left(\Sigma_{c_1}\right)\left(\bmu_{c_2}-\bmu_{c_1}\right)^\top\left(\bmu_{c_2}-\bmu_{c_1}\right)\right].
\end{align}
Now we take into account the term $- (\bmu_{c_1}^\top(\bmu_{c_2}-\bmu_{c_1}))^2$ in \eqref{eq:variance_product},
\begin{align}
    &\E_{c_1,c_2}\left[\bmu_{c_1}^\top\bmu_{c_1}(\bmu_{c_2}-\bmu_{c_1})^\top(\bmu_{c_2}-\bmu_{c_1}) - (\bmu_{c_1}^\top(\bmu_{c_2}-\bmu_{c_1}))^2\right] \nonumber\\
    &\hspace{10mm} = \E_{c_1,c_2}\left[\bmu_{c_1}^\top\bmu_{c_1}\bmu_{c_2}^\top\bmu_{c_2}-(\bmu_{c_1}^\top\bmu_{c_2})^2\right]\nonumber\\
    &\hspace{10mm} \leq \E_{c_1,c_2}\left[\bmu_{c_1}^\top\bmu_{c_1}\bmu_{c_2}^\top\bmu_{c_2}\right]\nonumber\\
    &\hspace{10mm} =\left(\Tr(\Sigma)+\bmu^\top\bmu\right)^2.
\end{align}
Thus $\E_{c_1,c_2}\left[\Var[\phi(\bx)^\top(\Proto{c_2}-\Proto{c_1})]\right]$ is calculated as follows.
\begin{align}\label{eq:expectation_variance_product}
    &\E_{c_1,c_2}\left[\Var[\phi(\bx)^\top(\Proto{c_2}-\Proto{c_1})]\right]\nonumber\\
    &= \frac{1}{K}\Var_{c\sim\tau}\left[\Tr(\Sigma_c)\right] + \frac{2}{K}\Tr(\Sigma_{\tau})\left(\Tr(\Sigma_{\tau})+\Tr(\Sigma)+\bmu^\top\bmu\right)+ (\Tr(\Sigma)+\bmu^\top\bmu)^2\nonumber\\
    &\hspace{5mm}+\E_{c_1,c_2\sim\tau}\left[\Tr\left(\Sigma_{c_1}\right)\left(\bmu_{c_2}-\bmu_{c_1}\right)^\top\left(\bmu_{c_2}-\bmu_{c_1}\right)\right].
\end{align}

Regarding $\norm{\Proto{c}}^2$, since the function computing square norm is convex, next equation holds with $D$-dimensional Jensen's inequlality \citep{kdim_jensen}:
\begin{align}\label{eq:variance_norm_sample_mean_to_one_input}
    \norm{\Proto{c}}^2 &= \norm{\frac{1}{K}\sum_{i=0\atop x\in S_c}\phi(\bx)}^2\nonumber\\
    &\leq \frac{1}{K}\norm{\sum_{i=0\atop x\in S_c}\phi(\bx)}^2.
\end{align}

Combining \eqref{eq:variance_first_norm_appear},  \eqref{eq:expectation_variance_product}, and \eqref{eq:variance_norm_sample_mean_to_one_input} we obtain
\begin{align}
    &\E_{c_1,c_2}\Var_{\bx\sim D_{c_1},S\sim \D^{\otimes 2N}}[\alpha] \nonumber\\
    &\leq \frac{4}{K}\E_{c\sim\tau}\Var_{\bx\sim \D_c}\left[\norm{\phi(\bx)}^2\right] + \frac{4}{K}\Var_{c\sim\tau}\left[\Tr(\Sigma_c)\right]\nonumber \\
    & \hspace{3mm} + \frac{8}{K}\Tr(\Sigma_{\tau})\left(\Tr(\Sigma_{\tau})+\Tr(\Sigma)+\bmu^\top\bmu\right)+ 4\left(\Tr(\Sigma)+\bmu^\top\bmu\right)^2\nonumber\\
    &\hspace{3mm}+4\E_{c_1,c_2\sim\tau}\left[\Tr\left(\Sigma_{c_1}\right)\left(\bmu_{c_2}-\bmu_{c_1}\right)^\top\left(\bmu_{c_2}-\bmu_{c_1}\right)\right].
\end{align}

\end{proof}

To complete the proof of Theorem \ref{theorem_main}, we calculate $\E_{c_1,c_2}\E_{\bx\sim \D_{c_1}, S\sim \D^{\otimes 2K}}[\alpha]^2$.
\begin{align}\label{eq:square_expectation_alpha}
    &\E_{c_1,c_2}\E_{\bx\sim \D_{c_1}, S\sim \D^{\otimes 2K}}[\alpha]^2\nonumber\\
    &= \E_{c_1,c_2}\left[\left(\frac{1}{K}(\Tr(\Sigma_{c_2})-\Tr(\Sigma_{c_1}))+(\bmu_{c_1}-\bmu_{c_2})^\top(\bmu_{c_1}-\bmu_{c_2})\right)^2\right]\nonumber\\
    &=\E_{c_1,c_2}\left[\left(\frac{1}{K}(\Tr(\Sigma_{c_2})-\Tr(\Sigma_{c_1}))\right)^2\right] + \E_{c_1,c_2}\left[\left((\bmu_{c_1}-\bmu_{c_2})^\top(\bmu_{c_1}-\bmu_{c_2})\right)^2\right] \nonumber\\
    &\hspace{3mm} + 2\E_{c_1,c_2}\left[\left(\frac{1}{K}(\Tr(\Sigma_{c_2})-\Tr(\Sigma_{c_1}))\right)\left((\bmu_{c_1}-\bmu_{c_2})^\top(\bmu_{c_1}-\bmu_{c_2})\right)\right]\nonumber\\
    &=\Var_{c_1,c_2\sim\tau}\left[\frac{1}{K}(\Tr(\Sigma_{c_2})-\Tr(\Sigma_{c_1}))\right] + \left(\E_{c_1,c_2\sim\tau}\left[\frac{1}{K}(\Tr(\Sigma_{c_2})-\Tr(\Sigma_{c_1}))\right]\right)^2 \nonumber\\
    &\hspace{3mm} + \E_{c_1,c_2}\left[\left((\bmu_{c_1}-\bmu_{c_2})^\top(\bmu_{c_1}-\bmu_{c_2})\right)^2\right]\nonumber\\
    &=\frac{2}{K^2}\Var_{c\sim\tau}\left[\Tr(\Sigma_c)\right] + \E_{c_1,c_2}\left[\left((\bmu_{c_1}-\bmu_{c_2})^\top(\bmu_{c_1}-\bmu_{c_2})\right)^2\right].
\end{align}
From $2$nd line to $3$rd line, we use the symmetry of the last term with respect to $c_1$ and $c_2$ and erase the term.

Combining \eqref{eq:conditioned_chebishef}, Lemma \ref{lemma:our_expectation}, Lemma \ref{lemma:our_variance_conditioned}, and \eqref{eq:square_expectation_alpha}, we obtain the bound.

\subsection{Theorem \ref{theorem_main} with \textit{N}-way Classification}\label{sec:n_way_theorem_main}
The upper bound on the risk of $N$-way prototype classifier is as follows.

\begin{theorem}\label{theorem_main_nway}
Let operation of binary class prototype classifier $\mathcal{M}$ as defined in \eqref{eq:nmc_prob}. Then for $\overline{\phi(S_c)}=\frac{1}{K}\Sigma_{\bx\in S_c}\phi(\bx)$, $\mu_c=\E_{\bx\sim \D_c}[\phi(\bx)]$, $\Sigma_c=\E_{\bx\sim \D_c}[(\phi(\bx)-\mu_c)(\phi(\bx)-\mu_c)^\top]$, $\mu=\E_{c\sim\tau}[\mu_c]$,$\Sigma=\E_{c\sim\tau}[(\mu_c-\mu)(\mu_c-\mu)^\top]$, $\Sigma_\tau=\E_{c\sim\tau}[\Sigma_c]$, if $\phi(\bx)$ has its fourth moment, miss classification risk of binary class prototype classifier  $\mathrm{R}_\mathcal{M}$ satisfy
\begin{align}\label{eqn_theorem_main_nway}
  &\mathrm{R}(\mathcal{M}(\phi,\bx,\{S_i\}_{i=1}^N), y)\nonumber\\
  &\leq N-1-\sum_{c=1 \atop c \neq y}^N\frac{4(\mathrm{Tr}(\Sigma))^{2}}{ \E\mathrm{V}[h_{L2}(\phi(\bx))]+\mathrm{V}_\mathrm{Tr}(\Sigma_{y})+\mathrm{V}_\mathrm{wit}(\Sigma_\tau, \Sigma, \bmu) + \E \operatorname{dist_{L2}^2}(\boldsymbol{\mu}_{y}, \boldsymbol{\mu}_{c})},
\end{align}
where
\begin{align}
    \E\mathrm{V}[h_{L2}(\phi(\bx))] =&  \frac{4}{K}\E_{y\sim\tau}\left[\Var_{\bx_c\sim \D_c}\left[\left\|\phi(\bx)\right\|^{2}\right]\right],\nonumber\\
    \mathrm{V}_\mathrm{Tr}(\Sigma_{y}) =& \left(\frac{4}{K}+\frac{2}{K^2}\right) \Var_{c\sim \tau}\left[\mathrm{Tr}\left(\Sigma_{c}\right)\right],\nonumber\\
    \mathrm{V}_\mathrm{wit}
            \left(\Sigma_\tau, \Sigma, \bmu\right) &= \frac{8}{K}\left(\Tr(\Sigma_\tau)\right)\left(\Tr(\Sigma)+\bmu^\top\bmu\right)+ 4\left(\Tr(\Sigma)+\bmu^\top\bmu\right)^2 \nonumber\\
            &\hspace{3mm}+4\E_{c\sim\tau}\left[\Tr\left(\Sigma_{y}\right)\left(\bmu_{y}-\bmu_{c}\right)^\top\left(\bmu_{y}-\bmu_{c}\right)\right],\\
    \E \operatorname{dist_{L2}^2}(\boldsymbol{\mu}_{y}, \boldsymbol{\mu}_{c}) =& \E_{y, c}\left[\left(\left(\boldsymbol{\mu}_{y}-\boldsymbol{\mu}_{c}\right)^\top\left(\boldsymbol{\mu}_{y}-\boldsymbol{\mu}_{c}\right)\right)^{2}\right].\nonumber
\end{align}
\end{theorem}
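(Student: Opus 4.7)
The plan is to reduce the $N$-way misclassification probability to a sum of pairwise binary misclassifications via the union bound, and then invoke Theorem \ref{theorem_main} on each pair. The query $\bx$ of true class $y$ is misclassified precisely when some competing prototype is strictly closer to $\phi(\bx)$ than the prototype of $y$, giving the pointwise inequality
\begin{align*}
\mathbb{I}[\arg\max \mathcal{M}(\phi, \bx, \{S_i\}_{i=1}^N) \neq y] \leq \sum_{c \neq y} \mathbb{I}\left[\|\phi(\bx) - \overline{\phi(S_c)}\|^2 < \|\phi(\bx) - \overline{\phi(S_y)}\|^2\right].
\end{align*}

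First I would take expectations of both sides over $\bx \sim \D_y$, $S \sim \D^{\otimes NK}$ and $y \sim \tau$. By linearity this yields $R_\mathcal{M}(\phi) \leq \sum_{c \neq y} \Pr(\alpha_{y,c} < 0)$, where $\alpha_{y,c} \triangleq \|\phi(\bx) - \overline{\phi(S_c)}\|^2 - \|\phi(\bx) - \overline{\phi(S_y)}\|^2$ is exactly the difference random variable analyzed in the binary case. Next I would apply Theorem \ref{theorem_main} to each summand with $(c_1, c_2)$ identified with $(y, c)$: the expectation calculation in Lemma \ref{lemma:our_expectation} and the variance bound in Lemma \ref{lemma:our_variance_conditioned} go through unchanged because $y$ and $c$ are each sampled from $\tau$ and the two corresponding support sets are independent given the labels. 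This gives, for each $c \neq y$,
\begin{align*}
\Pr(\alpha_{y,c} < 0) \leq 1 - \frac{4(\mathrm{Tr}(\Sigma))^{2}}{\E\mathrm{V}[h_{L2}(\phi(\bx))]+\mathrm{V}_\mathrm{Tr}(\Sigma_{y})+\mathrm{V}_\mathrm{wit}(\Sigma_\tau, \Sigma, \bmu) + \E \operatorname{dist_{L2}^2}(\boldsymbol{\mu}_{y}, \boldsymbol{\mu}_{c})}.
\end{align*}
Summing $N-1$ copies, the constant $1$'s accumulate into the leading $N-1$ and the negative fractions form the sum appearing in the statement of Theorem \ref{theorem_main_nway}.

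The main obstacle I anticipate is bookkeeping rather than any new conceptual ingredient: the denominators in Theorem \ref{theorem_main} are already outer expectations over the randomly drawn pair of classes, so I must verify that identifying $(c_1, c_2)$ with $(y, c)$ reproduces the precise expressions appearing in the $N$-way bound. In particular, the simplification in equation \eqref{eq:square_expectation_alpha} exploits symmetry in $(c_1, c_2)$ after taking the outer expectation, and this symmetry survives when $(y, c)$ are drawn independently from $\tau$. A secondary caveat is that for large $N$ the union bound can exceed $1$, so the inequality is only informative when each pairwise term is small; this does not affect the derivation but is worth noting, as it also explains why the bound degrades gracefully with the number of ways.
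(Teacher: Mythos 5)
Your proposal is correct and follows essentially the same route as the paper: a union bound (the paper invokes Fr\'echet's inequality) over the pairwise events $\alpha_{y,c}<0$, followed by applying Theorem~\ref{theorem_main} to each summand. Your additional bookkeeping remarks about the symmetry in \eqref{eq:square_expectation_alpha} and the bound's vacuity for large $N$ are accurate but not needed beyond what the paper already does.
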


\begin{proof}
Let $x, y$ be the input and its class of a query data. We define $\alpha_c$ by  $\alpha_c=\norm{\phi(\bx)-\overline{\phi(S_{c})}}^2-\norm{\phi(\bx)-\overline{\phi(S_y)}}^2$. Then a prototype classifier miss-classify a class of input $x$, $\hat{y}$, if $\exists c \in [1, N], c \neq y, \space \alpha_c < 0$ . Hence: $\mathrm{R}_\mathcal{M}(\phi) = \Pr(\bigcup_{\substack{c=1 \\ c\neq y}}^N \alpha_c < 0)$

By Frechet's inequality, next equation holds:
\begin{equation*}
    \mathrm{R}_\mathcal{M}(\phi) \leq \sum_{\substack{c=1 \\ c\neq y}}^N \Pr(\alpha_i < 0).
\end{equation*}
Noting that Theorem \ref{theorem_main} can be applied to each term in the summation and then we obtain Theorem \ref{theorem_main_nway}
\end{proof}

\subsection{Detailed performance results}
\label{sec:performace_results_detail}

 We show in Table \ref{table:resnet18_all} the detailed performance results of standard object recognition in this section with $95\%$ confidence margin. The table shows the similar result with Table \ref{table:resnet12_all}. We can observe that the prototype classifier with $L_2$-norm, EST+$L_2$-norm, LDA+$L_2$-norm performs comparably with ProtoNet and the linear-evaluation-based approach in most of the settings.

\begin{landscape}
 \begin{table*}[t]
 \centering
  \caption{
Classification accuracies with ResNet18 on \textit{mini}ImageNet, \textit{tiered}ImageNet, CIFARFS, and FC100 of ProtoNet, linear-evaluation-based methods \citep{closer_look_at}, centering with $L_2$-norm \cite{simple_shot}, and ours. The Baseline without linear-evaluation methods with accuracy greater than the lower $95\%$ confidence margin of the accuracy of ProtoNet and Baseline are in bold. Regarding to Baseline++, Baseline++ without linear-evaluation methods with accuracy greater than the lower $95\%$ confidence margin of the accuracy of ProtoNet and Baseline++ are in bold.}
  \resizebox{1.5\textwidth}{!}{
   \begin{tabular}{lcccccccc}

	 & \multicolumn{2}{c}{\textit{mini}ImageNet}    & \multicolumn{2}{c}{\textit{tiered}ImageNet}     & \multicolumn{2}{c}{CIFAR-FS}  & \multicolumn{2}{c}{FC100}     \\    \toprule
							& \textbf{1-shot}        & \textbf{5-shot}   & \textbf{1-shot}  & \textbf{5-shot} & \textbf{1-shot}  & \textbf{5-shot} & \textbf{1-shot}  & \textbf{5-shot}\\
\textit{methods with meta-learning or linear-evaluation} &         &   &                 &               & & & & \\\midrule
 ProtoNet\cite{protonet} & $56.74 \pm 0.84$& $75.64  \pm 0.62$ &  $ 61.75\pm 0.94$                & $ 81.56 \pm 0.68 $  & $ 65.46\pm 0.96 $ & $ 79.52 \pm 0.66 $& $ 35.92 \pm 0.71 $ & $ 50.86\pm 0.71$               \\
Baseline \cite{closer_look_at} & $ 55.41 \pm 0.82$ & $76.95 \pm 0.61 $ & $ 63.38 \pm 0.91 $ & $ 83.18 \pm 0.64$ & $ 65.12 \pm 0.88 $ & $ 79.68 \pm 0.64 $ & $ 40.06 \pm 0.68$  & $ 57.04 \pm 0.71$\\
 \midrule \multicolumn{2}{l}{\textit{A prototype classifier with feature-transformation methods}}         &   &                 &                & & & & \\\midrule
 Baseline-w/o-linear , centering+$L_2$-norm\cite{simple_shot}   & $ \textbf{57.67} \bpm \textbf{0.83} $ & $75.50 \pm 0.67 $ & $ \textbf{65.26} \bpm \textbf{0.88} $ & $81.63\pm 0.64$ & $ \textbf{65.26} \bpm \textbf{0.86} $ & $78.58\pm 0.66$ & $ \textbf{41.51} \bpm \textbf{0.72} $ & $ \textbf{56.44} \bpm \textbf{0.74} $ \\\midrule

Baseline-w/o-linear   & $ 43.86 \pm 0.80 $ & $72.36\pm 0.92$ & $ 56.16 \pm 0.89 $ & $ 80.33 \pm0.66$ & $ 54.06 \pm0.85$ & $77.76 \pm0.64$ & $35.90\pm0.61$  & $55.20\pm 0.77$\\
Baseline-w/o-linear , $L_2$-norm  & $ \textbf{56.57} \bpm \textbf{0.80} $ & $ \textbf{76.44} \bpm \textbf{0.61} $ & $ \textbf{65.19} \bpm \textbf{0.87} $ & $ \textbf{82.93} \bpm \textbf{0.66} $ & $ \textbf{64.76} \bpm \textbf{0.87} $ & $ \textbf{79.90} \bpm \textbf{0.66} $ & $ \textbf{40.96} \bpm \textbf{0.71} $ & $ \textbf{57.71} \bpm \textbf{0.76} $\\
Baseline-w/o-linear , EST  & $44.19\pm 0.82$ & $70.85 \pm 0.92$ & $57.05 \pm0.93 $ & $73.16\pm0.78$ & $57.16 \pm 0.88$ & $78.98 \pm0.69$ & $ \textbf{48.43} \bpm \textbf{0.91} $ & $ \textbf{63.75} \bpm \textbf{0.73} $ \\
Baseline-w/o-linear , EST+$L_2$-norm  & $ \textbf{56.39} \bpm \textbf{0.79} $ & $ \textbf{76.24} \bpm \textbf{0.64} $ & $ \textbf{64.71} \bpm \textbf{0.91} $ & $ \textbf{83.24} \bpm \textbf{0.67} $ & $ \textbf{65.54} \bpm \textbf{0.71} $ & $ \textbf{79.80} \bpm \textbf{0.71} $ & $ \textbf{50.50} \bpm \textbf{0.97} $ & $\textbf{65.31} \bpm \textbf{0.77} $ \\
Baseline-w/o-linear , LDA  & $47.14\pm 0.81 $ & $ 69.87\pm 0.65$ & $ 57.05 \pm0.90 $ & $ 81.19 \pm0.65$ & $56.49 \pm0.86$ & $78.88 \pm0.66$ & $\textbf{50.29}\bpm\textbf{0.85}$  & $\textbf{63.56}\bpm \textbf{0.77}$\\
Baseline-w/o-linear , LDA+$L_2$-norm  & $\textbf{56.37}\bpm \textbf{0.81}$ & $\textbf{76.39}\pm \textbf{0.66}$ & $\textbf{65.44} \bpm\textbf{0.94} $ & $\textbf{83.16} \bpm\textbf{0.62}$ & $\textbf{65.64} \bpm \textbf{0.84}$ & $\textbf{80.72} \bpm\textbf{0.67}$ & $\textbf{50.40}\bpm\textbf{0.97}$  & $\textbf{65.31}\bpm \textbf{0.77}$\\
\midrule
\textit{methods with meta-learning or linear-evaluation} &         &   &                 &               & & & & \\\midrule
Baseline++ \cite{closer_look_at} & $55.07\pm 0.81$ & $74.71\pm 0.61$ & $64.02 \pm 0.92 $ & $ 83.18 \pm 0.64 $ & $ 65.64 \pm 0.93 $ & $79.80 \pm 0.66$ & $36.93\pm 0.70$ & $50.41\pm 0.73$ \\
\midrule \multicolumn{2}{l}{\textit{A prototype classifier with feature-transformation methods}}         &   &                 &                & & & & \\\midrule
 Baseline++-w/o-linear , centering+$L_2$-norm\cite{simple_shot}   & $\textbf{57.00}\bpm \textbf{0.64}$ & $ 74.06 \pm 0.61$ & $ \textbf{64.92} \bpm \textbf{0.91}$ & $81.41\pm 0.66$ & $\textbf{66.14} \bpm \textbf{0.93} $ & $ \textbf{80.03} \bpm \textbf{0.71} $ & $ \textbf{38.30}\bpm \textbf{0.74} $ & $ \textbf{51.06} \bpm \textbf{0.70} $\\\midrule
Baseline++-w/o-linear & $ 36.80\pm 0.76$ & $ 63.76 \pm 0.71 $ & $48.27 \pm0.91 $ & $ 75.87 \pm 0.71 $ & $ 50.18 \pm0.96 $ & $74.23 \pm0.70$ & $ 30.76\pm0.62$  & $47.62\pm 0.69$\\
Baseline++-w/o-linear , $L_2$-norm  & $\textbf{57.21}\bpm \textbf{0.83}$ & $\textbf{74.89} \bpm \textbf{0.65}$ & $\textbf{66.67} \bpm\textbf{0.94} $ & $\textbf{82.49} \bpm\textbf{0.68}$ & $\textbf{66.84} \bpm\textbf{0.92}$ & $\textbf{80.49} \bpm\textbf{0.71}$ & $\textbf{38.55}\bpm\textbf{0.72}$  & $\textbf{51.15}\bpm \textbf{0.71}$\\
Baseline++-w/o-linear , EST  & $ 47.21 \pm 0.77 $ & $69.11\pm 0.64$ & $ 53.49 \pm 0.90 $ & $ 71.81 \pm 0.75 $ & $ 53.36 \pm0.93 $ & $73.70 \pm 0.74 $ & $ \textbf{39.92} \pm \textbf{0.88} $  & $ \textbf{53.61} \pm \textbf{0.70} $\\
Baseline++-w/o-linear , EST+$L_2$-norm  & $\textbf{57.00}\bpm \textbf{0.75}$ & $\textbf{76.13}\bpm \textbf{0.64}$ & $\textbf{65.52} \bpm\textbf{0.97} $ & $\textbf{79.96} \bpm\textbf{0.74}$ & $\textbf{66.57} \bpm \textbf{0.95}$ & $ \textbf{79.42} \pm \textbf{0.70} $ & $\textbf{42.89}\bpm\textbf{0.82}$  & $\textbf{56.82}\bpm \textbf{0.70}$\\
Baseline++-w/o-linear , LDA  & $ 45.95\pm 0.78 $ & $68.34 \pm 0.68 $ & $ 49.09 \pm0.90 $ & $ 76.17 \pm 0.68 $ & $ 52.98 \pm 0.91 $ & $73.36 \pm 0.75 $ & $\textbf{40.27}\bpm \textbf{0.79}$  & $\textbf{55.80}\bpm \textbf{0.74}$\\
Baseline++-w/o-linear , LDA+$L_2$-norm  & $\textbf{56.78}\bpm \textbf{0.87}$ & $\textbf{75.83}\bpm \textbf{0.65}$ & $\textbf{66.68} \bpm\textbf{0.90} $ & $\textbf{82.53} \bpm\textbf{0.67}$ & $\textbf{65.89} \bpm \textbf{0.93}$ & $\textbf{79.47} \bpm \textbf{0.70}$ & $\textbf{44.21}\bpm\textbf{0.86}$  & $\textbf{58.10}\bpm \textbf{0.70}$\\
 \bottomrule
 \end{tabular}
 }

 \label{table:resnet18_all}
 \end{table*}
 \end{landscape}
\subsection{Performance results of cross dataset}
\label{sec:performace_results_cross_dataset}

 We show in Table \ref{table:resnet_cross_dataset} the performance results of cross-dataset scenario in this section with $95\%$ confidence margin. From the table, we can observe that the prototype classifier with $L_2$-norm, EST+$L_2$-norm and LDA+$L_2$-norm performs comparably with linear-evaluation-based methods. 
 Therefore, our analysis results still hold on the cross-domain scenario.
 
 We also found that ProtoNet performs worse than linear-evaluation-based methods and a prototype classifier with feature-transformation methods. 
 In this scenario, we should focus on improving the performance of a prototype classifier with features extracted from a model trained without meta-learning, e.g, Baseline and Baseline++. Since there is no guarantee that the features of a model trained with cross-entropy loss will be distributed in Gaussian distribution, \citet{theoretical_shot}'s bound cannot explain the performance improvement of the prototype classifier while our analysis can.
 
 \begin{table}[h]
 \caption{
Classification accuracies with ResNet12 and ResNet18 on cross-domain scenario (\textit{mini}ImageNet $\rightarrow$ CUB) of ProtoNet, linear-evaluation-based methods \citep{closer_look_at}, centering with $L_2$-norm \cite{simple_shot}, and ours.  The Baseline without linear-evaluation methods with accuracy greater than the lower $95\%$ confidence margin of the accuracy of ProtoNet and Baseline are in bold. Regarding to Baseline++, Baseline++ without linear-evaluation methods with accuracy greater than the lower $95\%$ confidence margin of the accuracy of ProtoNet and Baseline++ are in bold. }
\centering
  \resizebox{1.0\textwidth}{!}{
   \begin{tabular}{lcccc}

	 & \multicolumn{2}{c}{\textit{mini}ImageNet $\rightarrow$ CUB} ResNet12 & \multicolumn{2}{c}{\textit{mini}ImageNet $\rightarrow$ CUB} ResNet18      \\    \toprule
							& \textbf{1-shot}        & \textbf{5-shot} &  \textbf{1-shot}        & \textbf{5-shot}  \\
\textit{methods with meta-learning or linear-evaluation} &         &  & & \\\midrule
 ProtoNet\cite{protonet} & $38.47 \pm 0.69$& $61.47  \pm 0.68$  & $ 44.01\pm 0.79 $ & $61.79\pm 0.75 $     \\
Baseline \cite{closer_look_at} & $46.91\pm 0.81$ & $66.55\pm 0.72$ & $46.36\pm 0.78$ & $66.97\pm 0.73 $ \\
 \midrule {\textit{A prototype classifier with feature-transformation methods}}         &   &                \\\midrule
 Baseline-w/o-linear , centering+$L_2$-norm\cite{simple_shot}   & $ \textbf{46.70}\bpm \textbf{0.79}$ & $65.36\pm 0.70$ & $ \textbf{46.56}\bpm \textbf{0.77}$ & $65.96\pm 0.72 $ \\\midrule

Baseline-w/o-linear   & $43.05\pm 0.72$ & $63.47\pm 0.67$ & $42.11\pm 0.74$ & $64.31\pm 0.73$ \\
Baseline-w/o-linear , $L_2$-norm  & $ \textbf{47.64}\bpm \textbf{0.79}$ & $ \textbf{66.31}\bpm \textbf{0.73}$ & $ \textbf{46.55} \bpm \textbf{0.83} $ & $ \textbf{67.21}\bpm \textbf{0.73} $ \\
Baseline-w/o-linear , EST  & $43.27\pm 0.77 $ & $61.59\pm 0.70 $ & $43.90\pm 0.77$ & $63.50\pm 0.72$\\
Baseline-w/o-linear , EST+$L_2$-norm  & $ \textbf{46.33} \bpm \textbf{0.78}$ & $ \textbf{66.28}\bpm \textbf{0.71} $ & $\textbf{47.10}\bpm \textbf{0.83}$ & $\textbf{66.33} \bpm \textbf{0.74} $ \\
Baseline-w/o-linear , LDA  & $ 43.22 \pm 0.74 $ & $ 62.47\pm 0.71 $ & $44.02\pm 0.74$ & $65.34\pm 0.75$   \\
Baseline-w/o-linear , LDA+$L_2$-norm  & $ \textbf{47.65}\bpm \textbf{0.80} $ & $\textbf{66.94} \bpm \textbf{0.72}$ & $\textbf{48.47}\bpm \textbf{0.82}$ & $\textbf{67.21} \bpm \textbf{0.69} $\\
\midrule
\textit{methods with meta-learning or linear-evaluation} &         &  \\\midrule
Baseline++ \cite{closer_look_at} & $45.56\pm 0.82$ & $ 66.00\pm 0.74 $ & $46.26\pm 0.87$ & $63.53\pm 0.71$ \\
\midrule {\textit{A prototype classifier with feature-transformation methods}}         &   &   \\\midrule
 Baseline++-w/o-linear , centering+$L_2$-norm\cite{simple_shot}   & $\textbf{47.36} \bpm \textbf{0.82} $ & $ 64.44 \pm 0.75$ & $\textbf{46.55}\bpm \textbf{0.79} $ & $ \textbf{64.52} \bpm \textbf{0.71} $  \\\midrule
Baseline++-w/o-linear & $ 40.68 \pm 0.71$ & $ 61.02 \pm 0.71$ & $39.72\pm 0.70$ & $60.00\pm 0.74$\\
Baseline++-w/o-linear , $L_2$-norm  & $ \textbf{47.03} \bpm \textbf{0.83} $ & $ \textbf{65.86} \bpm \textbf{0.75} $ & $ \textbf{46.87} \bpm \textbf{0.82} $ & $ \textbf{65.39} \bpm \textbf{0.71}$ \\
Baseline++-w/o-linear , EST  & $ 40.28 \pm 0.73 $ & $ 56.79 \pm 0.72 $ & $40.11\pm 0.74$ & $54.95 \pm 0.73 $\\
Baseline++-w/o-linear , EST+$L_2$-norm  & $ \textbf{46.64} \bpm \textbf{0.84} $ & $ \textbf{65.39} \bpm \textbf{0.74} $ & $ \textbf{45.96} \bpm \textbf{0.78} $ & $ \textbf{64.23} \bpm \textbf{0.77} $  \\
Baseline++-w/o-linear , LDA  & $40.44\pm 0.75$ & $ 57.55 \pm 0.69$ & $39.54 \pm 0.72$ & $ 56.51\pm 0.78 $ \\
Baseline++-w/o-linear , LDA+$L_2$-norm  & $ \textbf{46.32} \bpm \textbf{0.81} $ & $ \textbf{65.35} \bpm \textbf{0.76} $ & $ \textbf{45.60} \bpm \textbf{0.81} $ & $ \textbf{64.17} \bpm \textbf{0.73} $ \\
 \bottomrule
 \end{tabular}
 }


    \label{table:resnet_cross_dataset}


\end{table}

\newpage
\subsection{Visualization of the feature distribution}\label{sec:visualization_of_feature_distirbution}

We show in Figure \ref{figure:tsne_featurea_distirbution} the distribution of features on testset of FC100 before and after applying the data transformation.
From figure \ref{figure:tsne_featurea_distirbution},  we can observe that the feature-transformation-methods slightly change the distributions even the transformations improve the performance of a prototype classifier.  The projection into a lower dimensional space for visualization does not accurately represent the relationship of a higher dimension. 

\begin{figure}[h]
    \vskip 0.2in
    \begin{center}

        \includegraphics[width=1\textwidth]{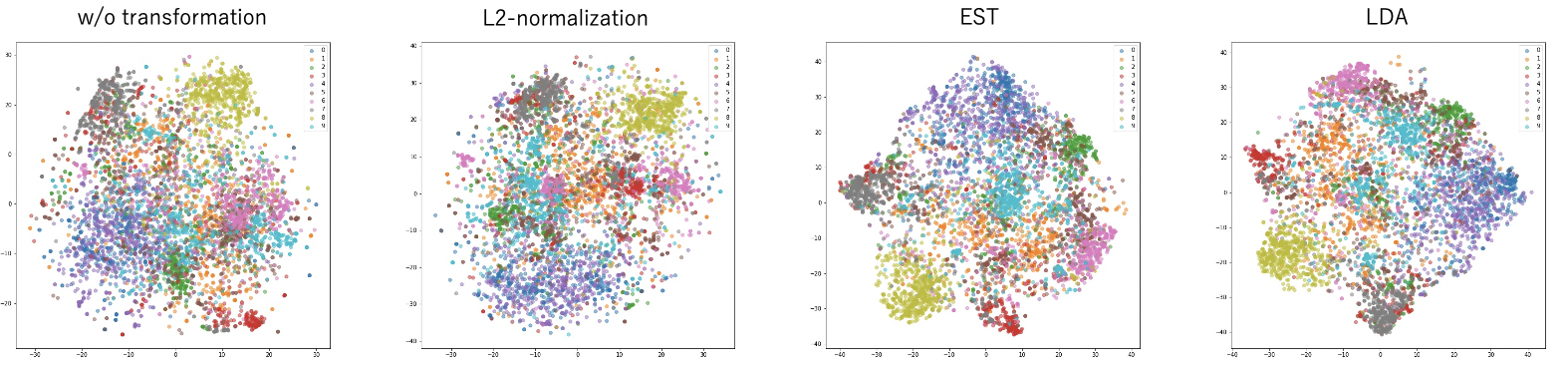}

    \vspace*{-4pt}
	\caption{}
    \label{figure:tsne_featurea_distirbution}
    \end{center}
    \vskip -0.2in
\end{figure}

\end{document}